\documentclass{article}

\PassOptionsToPackage{sort, numbers, compress}{natbib}

\usepackage[preprint]{nips_2018}

\usepackage[utf8]{inputenc} 
\usepackage[T1]{fontenc}    
\usepackage{hyperref}       
\usepackage{url}            
\usepackage{booktabs}       
\usepackage{amsfonts}       
\usepackage{nicefrac}       
\usepackage{microtype}      

\usepackage{amssymb}
\usepackage{amsmath}
\usepackage{amsthm}
\usepackage{bm}
\usepackage{graphicx}
\usepackage{subcaption}
\usepackage{placeins}

\newcommand{\E}{\mathbf{E}}
\renewcommand{\Re}{\mathbb{R}}

\DeclareMathOperator*{\argmin}{argmin}
\DeclareMathOperator*{\argmax}{argmax}

\DeclareMathOperator*{\Rank}{Rank}
\DeclareMathOperator*{\AUUC}{AUUC}

\newcommand{\abs}[1]{\left\vert #1 \right\vert}
\newcommand{\norm}[1]{\left\Vert #1 \right\Vert}

\newcommand\given[1][]{\mid}
\newcommand{\iid}{\stackrel{\mathrm{i.i.d.}}{\sim}}

\newcommand{\dy}{\mathrm{d}y}

\newcommand{\dbm}[1]{\mathrm{d}\bm{#1}}

\newcommand{\ny}{n}
\newcommand{\nz}{n}
\newcommand{\nt}{\widetilde{n}}
\newcommand{\nw}{\widetilde{n}}
\newcommand{\xbtil}{\widetilde{\bm x}}

\theoremstyle{plain}
\newtheorem{theorem}{Theorem}
\newtheorem{lemma}{Lemma}
\newtheorem{corollary}{Corollary}

\newtheorem{definition}{Definition}

\title{Uplift Modeling from Separate Labels}

\author{
  Ikko Yamane$^{1,2}$\quad 
  Florian Yger$^{3,2}$\quad
  Jamal Atif$^3$\quad
  Masashi Sugiyama$^{2,1}$\\
  $^1$ The University of Tokyo, Tokyo, Japan\\
  $^2$ RIKEN Center for Advanced Intelligence Project (AIP), Tokyo, Japan\\
  $^3$ LAMSADE, CNRS, Universit{\'e} Paris-Dauphine, Universit{\'e} PSL, PARIS, FRANCE\\
  \texttt{$\{$yamane@ms., sugi@$\}$k.u-tokyo.ac.jp, $\{$florian.yger@, jamal.atif@$\}$dauphine.fr}
}

\begin{document}
\maketitle

\begin{abstract}
\emph{Uplift modeling} is aimed at estimating the incremental impact of an action
on an individual's behavior, which is useful in various application domains such as
targeted marketing (advertisement campaigns) and personalized medicine (medical treatments).
Conventional methods of uplift modeling require every instance to
be \emph{jointly} equipped with two types of labels: the taken action and its outcome.
However, obtaining two labels for each instance at the same time is difficult or expensive
in many real-world problems.
In this paper, we propose a novel method of uplift modeling
that is applicable to a more practical setting where only one type of labels is available for each instance.
We show a mean squared error bound for the proposed estimator
and demonstrate its effectiveness through experiments.
\end{abstract}

\section{Introduction}
In many real-world problems, a central objective is to optimally choose a right action to maximize the profit of interest.
For example, in marketing, an advertising campaign is designed to promote people to purchase a product~\citep{renault_chapter_2015}.
A marketer can choose whether to deliver an advertisement to each individual or not,
and the outcome is the number of purchases of the product.
Another example is personalized medicine, where a treatment is chosen depending on each
patient to maximize the medical effect and minimize the risk of adverse events
or harmful side effects~\citep{katsanis_case_2008, abrahams_case_2009}.
In this case, giving or not giving a medical treatment to each individual are the possible actions to choose,
and the outcome is the rate of recovery or survival from the disease.
Hereafter, we use the word \emph{treatment} for taking an action, following the personalized medicine example.

\emph{A/B testing}~\citep{kohavi_controlled_2009} is a standard method for such tasks, where two groups of people, A and B, are randomly chosen.
The outcomes are measured separately from the two groups after treating all the members of Group A but none of Group B.
By comparing the outcomes between the two groups by a statistical test,
one can examine whether the treatment positively or negatively affected the outcome.
However, A/B testing only compares the two extreme options: treating everyone or no one.
These two options can be both far from optimal when the treatment has positive effect on some individuals but negative effect on others.

To overcome the drawback of A/B testing, \emph{uplift modeling} has been investigated recently~\citep{radcliffe2011real, uplift-icml2012-workshop, rzepakowski2012decision}.
\emph{Uplift modeling} is the problem of estimating the \emph{individual uplift}, the incremental profit brought by the treatment conditioned on features of each individual.
Uplift modeling enables us to design a refined decision rule for optimally determining whether to treat each individual or not, depending on his/her features.
Such a treatment rule allows us to only target those who positively respond to the treatment and avoid treating negative responders.

In the standard uplift modeling setup, there are two types of labels~\citep{radcliffe2011real, uplift-icml2012-workshop, rzepakowski2012decision}:
One is whether the treatment has been given to the individual and the other is its outcome.
Existing uplift modeling methods require each individual to be \emph{jointly}
given these two labels for analyzing the association between outcomes and the
treatment~\citep{radcliffe2011real, uplift-icml2012-workshop, rzepakowski2012decision}.
However, joint labels are expensive or hard (or even impossible) to obtain in many real-world problems.
For example, when distributing an advertisement by email,
we can easily record to whom the advertisement has been sent.
However, for technical or privacy reasons,
it is difficult to keep track of those people until we observe the outcomes on whether they buy the product or not.
Alternatively, we can easily obtain information about purchasers of the product at the moment when the purchases are actually made.
However, we cannot know whether those who are buying the product have been exposed to the advertisement or not.
Thus, every individual always has one missing label.
We term such samples \emph{separately labeled samples}.

In this paper, we consider a more practical uplift modeling setup
where no jointly labeled samples are available, but only separately labeled samples are given.
Theoretically, we first show that the individual uplift is identifiable when we have two sets of
separately labeled samples collected under \emph{different} treatment policies.
We then propose a novel method that directly estimates the individual uplift only from separately labeled samples.
Finally, we demonstrate the effectiveness of the proposed method through experiments.

\section{Problem Setting}
This paper focuses on estimation of the \emph{individual uplift} $u(\bm x)$, often called \emph{individual treatment effect (ITE)} in the causal inference literature~\citep{rubin_causal_2005}, defined as
$u(\bm x) := \E[Y_1\given \bm x] - \E[Y_{-1}\given \bm x]$,
where
$\E[\;\cdot\given\cdot\;]$ denotes the conditional expectation,
and $\bm x$ is a $\mathcal{X}$-valued random variable
($\mathcal{X}\subseteq \Re^d$) representing features of an individual,
and
 $Y_1, Y_{-1}$ are $\mathcal{Y}$-valued \emph{potential outcome variables}~\citep{rubin_causal_2005}
($\mathcal{Y}\subseteq\Re$) representing outcomes that would be observed
if the individual was treated and not treated, respectively.
Note that only one of either $Y_1$ or $Y_{-1}$ can be observed for each individual.
We denote the $\{1, -1\}$-valued random variable of the treatment assignment
by $t$, where $t = 1$ means that the individual has been treated and $t = -1$ not treated.
We refer to the population for which we want to evaluate $u(\bm x)$ as the \emph{test population},
and denote the density of the test population by $p(Y_1, Y_{-1}, \bm x, t)$.

We assume that $t$ is \emph{unconfounded}
with either of $Y_1$ and $Y_{-1}$ conditioned on $\bm x$,
i.e.\@ $p(Y_1\given \bm x, t) = p(Y_1\given \bm x)$
and $p(Y_{-1}\given \bm x, t) = p(Y_{-1}\given \bm x)$.
Unconfoundedness is an assumption commonly made in observational studies~\citep{ite_gen_bound, causal_uplift_review}.
For notational convenience, we denote by $y := Y_t$ the outcome of the treatment assignment $t$.
Furthermore, we refer to any conditional density of $t$ given $\bm x$ as a \emph{treatment policy}.

In addition to the test population,
we suppose that there are two \emph{training populations} $k = 1, 2$,
whose joint probability density $p_k(Y_1, Y_{-1}, \bm x, t)$
satisfy
\begin{align}
    p_k(Y_{t_0}=y_0\given\bm x = \bm x_0)
    &= p(Y_{t_0}=y_0\given\bm x=\bm x_0)\quad \text{(for $k=1, 2$)},\label{eq:1st_cond}\\
    p_1(t=t_0\mid \bm x=\bm x_0)
    &\neq p_2(t=t_0 \mid \bm x=\bm x_0), \label{eq:conditions_on_p}
\end{align}
for all possible realizations $\bm x_0\in\mathcal{X}$, $t_0\in\{-1, 1\}$, and $y_0\in\mathcal{Y}$.
Intuitively, Eq.~\eqref{eq:1st_cond} means that
potential outcomes depend on $\bm x$ in the same way as those in the test population,
and Eq.~\eqref{eq:conditions_on_p} states that those two policies give a treatment with different probabilities for every $\bm x = \bm x_0$.

We suppose that the following four training data sets, which we call \emph{separately labeled samples}, are given:
\begin{align*}
\{(\bm x^{(k)}_i, y^{(k)}_i)\}_{i=1}^{\ny_k}\iid p_k(\bm x, y),\quad
 \{(\xbtil^{(k)}_i, t^{(k)}_i)\}_{i=1}^{\nt_k}\iid p_k(\bm x, t)\quad \text{(for $k=1, 2$)},
\end{align*}
where $\ny_k$ and $\nt_k$, $k=1, 2$, are positive integers.
Under Assumptions~\eqref{eq:1st_cond}, \eqref{eq:conditions_on_p}, and the unconfoundedness, we have
$p_k(Y_t \given\bm x, t=t_0) = p(Y_{t_0}\given\bm x, t=t_0) = p(Y_{t_0}\given\bm x)$
for $t_0\in\{-1,1\}$ and $k\in\{1, 2\}$.
Note that we can safely denote $p(y \given \bm x, t) := p_k(y\given \bm x, t)$.
Moreover, we have $\E[Y_{t_0}\given\bm x] = \E[y\given\bm x, t=t_0]$ for $t_0=1, -1$,
and thus our goal boils down to the estimation of
\begin{align}
    u(\bm x) = \E[y\given\bm x,t=1] - \E[y\given\bm x,t=-1]\label{eq:i_ul}
\end{align}
from the separately labeled samples,
where the conditional expectation is taken over $p(y\given\bm x, t)$.

Estimation of the individual uplift is important for the following reasons.

\textbf{It enables the estimation of the average uplift.}
The \emph{average uplift} $U(\pi)$ of the treatment policy $\pi(t\given\bm x)$ is the
average outcome of $\pi$, subtracted by that of the policy $\pi_{-}$,
which constantly assigns the treatment as $t=-1$,
i.e., $\pi_-(t=\tau\given\bm x) := 1[\tau=-1]$,
where $1[\cdot]$ denotes the indicator function:
\begin{align}
  U(\pi)
  &:= \iint \sum_{t=-1,1}y p(y\given\bm x,t)\pi(t\given\bm x) p(\bm x) \mathrm{d}y\dbm x
  -\iint \sum_{t=-1,1} y p(y\given\bm x,t)\pi_-(t\given\bm x) p(\bm x) \mathrm{d}y\dbm x\nonumber\\
  &= \int u(\bm x) \pi(t = 1\given\bm x) p(\bm x) \dbm x.\label{eq:AO_AU}
\end{align}
This quantity can be estimated from samples of $\bm x$ once we obtain an estimate of $u(\bm x)$.

\textbf{It provides the optimal treatment policy.}
The treatment policy given by $\pi(t=1\given\bm x) = 1[0 \le u(\bm x)]$ is the
optimal treatment that maximizes the average uplift $U(\pi)$ and equivalently the average outcome $\iint \sum_{t=-1,1}y
p(y\given\bm x,t)\pi(t\given\bm x) p(\bm x) \mathrm{d}y\dbm x$ (see Eq.~\eqref{eq:AO_AU})~\citep{rzepakowski2012decision}.

\textbf{It is the optimal ranking scoring function.}
From a practical viewpoint, it may be useful to prioritize
individuals to be treated according to some ranking scores
especially when the treatment is costly and only a limited number of
individuals can be treated due to some budget constraint.
In fact, $u(\bm x)$ serves as the optimal ranking scores for this purpose~\citep{tuffery_data_2011}.
More specifically, we define a family of treatment policies $\{\pi_{f,
  \alpha}\}_{\alpha\in\Re}$
\emph{associated with scoring function $f$} by $\pi_{f,\alpha}(t=1\given\bm x) =
1[\alpha \le f(\bm x)]$. Then, under some technical condition, $f = u$
maximizes the \emph{area under the uplift curve (AUUC)} defined as
\begin{align*}
  \AUUC(f)
    &:= \int_0^1 U(\pi_{f, \alpha}) \mathrm{d}C_\alpha\\
    &= \int_0^1 \int u(\bm x) 1[\alpha \le f(\bm x)] p(\bm x) \dbm x \mathrm{d}C_\alpha\\
    &= \E[1[f(\bm x) \le f(\bm x')] u(\bm x')],
\end{align*}
where $C_\alpha := \Pr[f(\bm x) < \alpha]$, $\bm x,\bm
x'\iid p(\bm x)$, and $\E$ denotes the expectation
with respect to these variables.
AUUC is a standard performance measure for uplift modeling
methods~\citep{radcliffe_using_2007, radcliffe2011real, uplift-icml2012-workshop, rzepakowski2012decision}.
For more details, see Appendix~\ref{sec:AUUC_ranking} in the supplementary material.

\textbf{Remark on the problem setting:}
Uplift modeling is often referred to as individual treatment effect estimation or heterogeneous treatment effect estimation
and has been extensively studied especially in the causal inference literature~\citep{rubin_causal_2005, pearl2009causality, causal_uplift_review, hill_bayesian_2011, imai_estimating_2013, wager_estimation_2015, johansson_learning_2016, kunzel_meta-learners_2017}.
In particular, recent research has investigated the problem under the setting of \emph{observational studies}, inference using data obtained from \emph{uncontrolled experiments} because of its practical importance~\citep{ite_gen_bound}.
Here, experiments are said to be uncontrolled when some of treatment variables are not controlled to have designed values.

Given that treatment policies are unknown, our problem setting is also of observational studies
but poses an additional challenge that stems from missing labels.
What makes our problem feasible is that we have two kinds of data sets
following different treatment policies.

It is also important to note that our setting generalizes the standard setting for observational studies since the former is reduced to the latter when one of the treatment policies always assigns individuals to the treatment group, and the other to the control group.

Our problem is also closely related to individual treatment effect estimation via instrumental variables~\citep{imbens_instrumental_2014, athey_generalized_2016, hartford_deep_2017, lewis_adversarial_2018}.\footnote{%
Among the related papers mentioned above, the most relevant one is~\citet{lewis_adversarial_2018},
which is concurrent work with ours.
}

\section{Naive Estimators}\label{sec:naive}
A naive approach is first estimating the conditional density
$p_k(y\given\bm x)$ and $p_k(t\given\bm x)$ from training samples by some
conditional density estimator \citep{bishop2006PRML,
  sugiyama2010conditional}, and then solving the following linear system for
$p(y\given\bm x, t=1)$ and $p(y\given\bm x, t=-1)$:
\begin{align}
  \underbrace{p_k(y\given \bm x)}_{\text{Estimated from $\{(\bm x^{(k)}_i, y^{(k)}_i)\}_{i=1}^{\ny}$}} &= \sum_{t=-1,1} p(y\given\bm x, t) \underbrace{p_k(t\given\bm x)}_{\text{Estimated from $\{(\xbtil^{(k)}_i, t^{(k)}_i)\}_{i=1}^{\nt}$}}
  \quad \text{(for $k = 1, 2$)}.\label{eq:p_y_given_x}
\end{align}
After that, the conditional expectations of $y$ over $p(y\given\bm x,
t=1)$ and $p(y\given\bm x, t=-1)$ are calculated by numerical integration, and finally
their difference is calculated to obtain another estimate of $u(\bm x)$.

However, this may not yield a good estimate due to the difficulty of
conditional density estimation and the instability of numerical integration.
This issue may be alleviated by working on the following linear system implied by Eq.~\eqref{eq:p_y_given_x} instead:
$\E_k[y\given\bm x] = \sum_{t=-1,1}\E[y\given\bm x, t]p_k(t\given\bm x)$, $k=1, 2$,
where $\E_k[y\given\bm x]$ and $p_k(t\given\bm x)$ can be estimated from our samples.
Solving this new system for $\E[y\given\bm x, t=1]$ and $\E[y\given\bm x, t=-1]$ and taking their difference gives an estimate of $u(\bm x)$.
A method called \emph{two-stage least-squares} for instrumental variable regression takes such an approach~\citep{imbens_instrumental_2014}.

The second approach of estimation $E_k[y|x]$ and $p_k(t|x)$ avoids both conditional density estimation and numerical
integration, but it still involves post processing of solving the linear system and
subtraction, being a potential cause of performance deterioration.

\section{Proposed Method}
In this section, we develop a method that can overcome the aforementioned problems
by directly estimating the individual uplift.

\subsection{Direct Least-Square Estimation of the Individual Uplift}
\label{sec:direct_estimation}
First, we will show an important lemma that directly relates the marginal distributions of separately labeled samples to the individual uplift $u(\bm x)$.

\begin{lemma}\label{lem:u_is_diff_ratio}
For every $\bm x$ such that $p_1(\bm x) \neq p_2(\bm x)$, $u(\bm x)$ can be
expressed as
\begin{align}
  u(\bm x)
  = 2\times\frac{\E_{y\sim p_1(y\given \bm x)}[y] - \E_{y\sim p_2(y\given \bm x)}[y]}%
{\E_{t\sim p_1(t\given\bm x)}[t] - \E_{t\sim p_2(t\given\bm x)}[t]}\label{eq:diff_ratio}.
\end{align}
\end{lemma}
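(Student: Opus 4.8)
The plan is to reduce the claimed identity to a short algebraic manipulation by writing both the numerator and the denominator of Eq.~\eqref{eq:diff_ratio} explicitly in terms of the treatment-conditional means $m_{+}(\bm x) := \E[y\mid\bm x, t=1]$ and $m_{-}(\bm x) := \E[y\mid\bm x, t=-1]$ together with the policy value $p_k(t=1\mid\bm x)$. The crucial preliminary fact, already established above, is that $\E[y\mid\bm x, t=t_0]$ does not depend on the population index $k$ (a consequence of unconfoundedness together with Eq.~\eqref{eq:1st_cond}), so that $m_{+}$ and $m_{-}$ are common to $p_1$ and $p_2$. With this in hand, $u(\bm x) = m_{+}(\bm x) - m_{-}(\bm x)$ by Eq.~\eqref{eq:i_ul}, and it remains only to show that the right-hand side of Eq.~\eqref{eq:diff_ratio} equals this difference.

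For the numerator I would apply the law of total expectation over $t\in\{-1,1\}$ inside each training population, as in the expectation version of Eq.~\eqref{eq:p_y_given_x}, to obtain
\begin{align*}
  \E_{y\sim p_k(y\mid\bm x)}[y]
  = m_{+}(\bm x)\, p_k(t=1\mid\bm x) + m_{-}(\bm x)\, p_k(t=-1\mid\bm x).
\end{align*}
Writing $\delta(\bm x) := p_1(t=1\mid\bm x) - p_2(t=1\mid\bm x)$ and using $p_k(t=-1\mid\bm x) = 1 - p_k(t=1\mid\bm x)$, the $t=-1$ term contributes $-\delta(\bm x)$ upon subtraction, so the numerator telescopes to
\begin{align*}
  \E_{y\sim p_1(y\mid\bm x)}[y] - \E_{y\sim p_2(y\mid\bm x)}[y]
  = \bigl(m_{+}(\bm x) - m_{-}(\bm x)\bigr)\,\delta(\bm x)
  = u(\bm x)\,\delta(\bm x).
\end{align*}

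For the denominator, a direct computation of the mean of the sign-valued variable $t$ gives $\E_{t\sim p_k(t\mid\bm x)}[t] = p_k(t=1\mid\bm x) - p_k(t=-1\mid\bm x) = 2\,p_k(t=1\mid\bm x) - 1$, so the difference across populations is exactly $2\,\delta(\bm x)$. Dividing the numerator by the denominator, the common factor $\delta(\bm x)$ cancels and leaves $u(\bm x)/2$, and multiplying by $2$ recovers Eq.~\eqref{eq:diff_ratio}.

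Since the statement is an exact identity rather than an estimate, I do not expect a genuinely hard analytic step; the care lies entirely in invoking the right assumptions. In particular, the cancellation is legitimate only when $\delta(\bm x)\neq 0$, i.e.\ when the two policies truly disagree at $\bm x$, which is precisely what Eq.~\eqref{eq:conditions_on_p} guarantees; this nonvanishing of the denominator is the single point I would verify with care, alongside checking that the conditional expectations are well defined (both marginals positive at $\bm x$) so that the law-of-total-expectation step is valid.
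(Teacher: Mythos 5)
Your proof is correct and follows essentially the same route as the paper's: expand $\E_{y\sim p_k(y\mid\bm x)}[y]$ by the law of total expectation over $t$ using the shared conditional $p(y\mid\bm x,t)$, telescope the $t=-1$ terms to get the numerator equal to $u(\bm x)\,\delta(\bm x)$, compute the denominator as $2\delta(\bm x)$ via $\E_{t\sim p_k(t\mid\bm x)}[t]=2p_k(t=1\mid\bm x)-1$, and cancel using the policy-disagreement condition of Eq.~\eqref{eq:conditions_on_p}. Your phrasing in terms of the common conditional means $m_{+},m_{-}$ rather than integrals of densities is only a cosmetic difference.
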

For a proof, refer to Appendix~\ref{sec:proof_of_lemma1} in the supplementary material.

Using Eq.~\eqref{eq:diff_ratio}, we can re-interpret the naive methods described
in Section~\ref{sec:naive} as estimating the conditional expectations on the right-hand side by separately performing regression on
$\{(\bm x^{(1)}_i, y^{(1)}_i)\}_{i=1}^{\ny_1}$, $\{(\bm x^{(2)}_i, y^{(2)}_i)\}_{i=1}^{\ny_2}$,
$\{(\xbtil^{(1)}_i, t^{(1)}_i)\}_{i=1}^{\nt_1}$, and $\{(\xbtil^{(2)}_i, t^{(2)}_i)\}_{i=1}^{\nt_2}$.
This approach may result in unreliable performance when the denominator
is close to zero, i.e., $p_1(t\given\bm x) \simeq p_2(t\given\bm x)$.

Lemma~\ref{lem:u_is_diff_ratio} can be simplified by introducing auxiliary variables $z$ and $w$, which are $\mathcal{Z}$-valued and $\{-1, 1\}$-valued random variables
whose conditional probability density and mass are defined by
\begin{align*}
  \textstyle p(z=z_0\given\bm x) &= \textstyle\frac{1}{2} p_1(y=z_0\mid\bm x)
                                     + \frac{1}{2} p_2(y=-z_0\given\bm x),\\
  \textstyle p(w=w_0\given\bm x) &= \textstyle\frac{1}{2} p_1(t=w_0\mid\bm x) + \frac{1}{2} p_2(t=-w_0\given\bm x),
\end{align*}
for any $z_0\in\mathcal{Z}$ and any $w_0\in\{-1, 1\}$,
where $\mathcal{Z} := \{s_0 y_0\given y_0\in\mathcal{Y}, s_0\in\{1, -1\}\}$.

\begin{lemma}\label{lem:u_is_ratio}
For every $\bm x$ such that $p_1(\bm x) \neq p_2(\bm x)$, $u(\bm x)$ can be
expressed as
\begin{align*}
u(\bm x) = 2\times\frac{\E[z \given \bm x]}{\E[w \mid \bm x]},
\end{align*}
where $\E[z\given\bm x]$ and $\E[w\given\bm x]$ are the conditional expectations
of $z$ given $\bm x$ over $p(z\given\bm x)$ and $w$ given $\bm x$ over
$p(w\given\bm x)$, respectively.
\end{lemma}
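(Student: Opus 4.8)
The plan is to reduce this statement to Lemma~\ref{lem:u_is_diff_ratio} by evaluating the two conditional expectations $\E[z\given\bm x]$ and $\E[w\given\bm x]$ directly from the definitions of the auxiliary variables and showing that they coincide, up to a common factor of $\tfrac12$, with the numerator and denominator appearing in Eq.~\eqref{eq:diff_ratio}. Concretely, I would first establish the two identities
\begin{align*}
  \E[z\given\bm x] &= \tfrac12\bigl(\E_{y\sim p_1(y\given\bm x)}[y] - \E_{y\sim p_2(y\given\bm x)}[y]\bigr),\\
  \E[w\given\bm x] &= \tfrac12\bigl(\E_{t\sim p_1(t\given\bm x)}[t] - \E_{t\sim p_2(t\given\bm x)}[t]\bigr),
\end{align*}
and then take their ratio.

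For the first identity I would insert the definition $p(z=z_0\given\bm x) = \tfrac12 p_1(y=z_0\given\bm x) + \tfrac12 p_2(y=-z_0\given\bm x)$ into $\E[z\given\bm x] = \int z_0\, p(z=z_0\given\bm x)\,\dz$ and split the integral into two pieces. The first piece is immediately $\tfrac12\E_{y\sim p_1(y\given\bm x)}[y]$. For the second piece the key move is the change of variable $y_0 = -z_0$: because the support $\mathcal{Z}=\{s_0 y_0\given y_0\in\mathcal{Y}, s_0\in\{1,-1\}\}$ is symmetric under negation, this substitution leaves the domain of integration unchanged while the integrand $z_0\,p_2(y=-z_0\given\bm x)$ becomes $-y_0\,p_2(y=y_0\given\bm x)$, producing exactly $-\tfrac12\E_{y\sim p_2(y\given\bm x)}[y]$. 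Thus the additive mixing of the two populations, combined with the reflection of the second component, encodes precisely the desired \emph{difference}.

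The second identity is obtained the same way with sums over $w_0\in\{-1,1\}$ in place of the integral: substituting $t_0=-w_0$ and using that $\{-1,1\}$ is closed under negation turns $\tfrac12\sum_{w_0}w_0\,p_2(t=-w_0\given\bm x)$ into $-\tfrac12\E_{t\sim p_2(t\given\bm x)}[t]$, while the $p_1$ term gives $\tfrac12\E_{t\sim p_1(t\given\bm x)}[t]$. Finally, taking the quotient of the two identities cancels the common factor $\tfrac12$, so that $\E[z\given\bm x]/\E[w\given\bm x]$ equals the fraction on the right-hand side of Eq.~\eqref{eq:diff_ratio}; multiplying by $2$ and invoking Lemma~\ref{lem:u_is_diff_ratio} yields $u(\bm x)=2\,\E[z\given\bm x]/\E[w\given\bm x]$. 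The restriction to $\bm x$ with $p_1(\bm x)\neq p_2(\bm x)$ is inherited from Lemma~\ref{lem:u_is_diff_ratio}, where (via Eq.~\eqref{eq:conditions_on_p}) it guarantees that the denominator $\E[w\given\bm x]$ is nonzero. The calculation is essentially mechanical; the only point demanding care is the sign bookkeeping in the reflection step, together with checking that the negation map acts as a bijection on $\mathcal{Z}$ so that the change of variable is valid.
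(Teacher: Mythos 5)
Your proposal is correct and follows essentially the same route as the paper's proof: compute $\E[z\given\bm x]$ and $\E[w\given\bm x]$ by substituting the mixture definitions, apply the sign-flip change of variables to the $p_2$ terms to produce the differences $\tfrac12(\E_{p_1}[\cdot]-\E_{p_2}[\cdot])$, and then take the ratio and invoke Lemma~\ref{lem:u_is_diff_ratio}. Your explicit remark that negation acts as a bijection on $\mathcal{Z}$ (justifying the change of variable) is a point the paper leaves implicit, but it is the same argument.
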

A proof can be found in Appendix~\ref{sec:proof_of_lemma2} in the supplementary material.

Let $w^{(k)}_{i} := (-1)^{k-1} t^{(k)}_i$ and $z^{(k)}_{i} := (-1)^{k-1} y^{(k)}_i$.
Assuming that $p_1(\bm x) = p_2(\bm x) =: p(\bm x)$, $\ny_1 = \ny_2$, and $\nt_1 =
\nt_2$ for simplicity,
$\{(\xbtil_i, w_i)\}_{i=1}^n := \{(\xbtil^{(k)}_i,
w^{(k)}_{i})\}_{k=1,2;\ i=1,\dots,\nw_k}$ and $\{(\bm x_i, z_i)\}_{i=1}^\nz := \{(\bm x^{(k)}_i,
z^{(k)}_{i})\}_{k=1,2;\ i=1,\dots,\nz_k}$ can
be seen as samples drawn from $p(\bm x, z) := p(z\given\bm x)p(\bm x)$
and $p(\bm x, w) := p(w\given\bm x)p(\bm x)$, respectively, where $\nz=\ny_1+\ny_2$ and $\nw=\nt_1+\nt_2$.
The more general cases where $p_1(\bm x) \neq p_2(\bm x)$, $\ny_1 \neq
\ny_2$, or $\nt_1 \neq \nt_2$ are discussed in Appendix~\ref{sec:different_p_or_n} in the supplementary material.

\begin{theorem}\label{thm:u_by_ls}
Assume that $\mu_w, \mu_z \in L^2(p)$ and $\mu_w(\bm x) \neq 0$ for every $\bm x$ such that $p(\bm x) > 0$,
where $L^2(p) := \{f: \mathcal{X}\to\Re \given \E_{\bm x\sim p(\bm x)}[f(\bm x)^2] < \infty\}$.
The individual uplift $u(\bm x)$ equals the
solution to the following least-squares problem:
\begin{align}
  u(\bm x) = \argmin_{f\in L^2(p)} \E[(\mu_w(\bm x)f(\bm x) - 2\mu_z(\bm x))^2]\label{eq:ls},
\end{align}
where $\E$ denotes the expectation over $p(\bm x)$,
$\mu_w(\bm x) := \E[w\given\bm x]$, and $\mu_z(\bm x) := \E[z\given\bm x]$.
\end{theorem}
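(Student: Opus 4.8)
The plan is to reduce the functional minimization to an elementary pointwise argument. Lemma~\ref{lem:u_is_ratio} already tells us that $u(\bm x) = 2\mu_z(\bm x)/\mu_w(\bm x)$ wherever $p(\bm x) > 0$, and since $\mu_w(\bm x)\neq 0$ there, this is equivalent to the identity $\mu_w(\bm x)\,u(\bm x) = 2\mu_z(\bm x)$. The key idea is to use this identity to absorb the term $2\mu_z$ into the leading term: the residual inside the objective factors as
\[
  \mu_w(\bm x) f(\bm x) - 2\mu_z(\bm x) = \mu_w(\bm x)\bigl(f(\bm x) - u(\bm x)\bigr),
\]
so that the objective becomes a $\mu_w^2$-weighted squared distance between $f$ and $u$,
\[
  \E\bigl[(\mu_w(\bm x) f(\bm x) - 2\mu_z(\bm x))^2\bigr]
    = \E\bigl[\mu_w(\bm x)^2 (f(\bm x) - u(\bm x))^2\bigr].
\]

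First I would record the identity $\mu_w u = 2\mu_z$ and carry out the substitution above. Second, I would observe that the integrand on the right is nonnegative pointwise, so the objective is bounded below by $0$; in particular $f = u$ makes it vanish. Third, I would establish uniqueness of the minimizer: because $w$ is $\{-1,1\}$-valued we have $\mu_w(\bm x)^2 > 0$ at every $\bm x$ with $p(\bm x) > 0$, so $\E[\mu_w^2 (f-u)^2] = 0$ forces $f(\bm x) = u(\bm x)$ for $p$-almost every $\bm x$, i.e.\@ $f = u$ as an element of $L^2(p)$. These three steps identify the $\argmin$ with $u$.

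The main obstacle is not the minimization but the underlying well-definedness and feasibility. On the positive side, since $\mu_w(\bm x) = \E[w\given\bm x]$ is an average of a $\{-1,1\}$-valued variable we have $\abs{\mu_w(\bm x)} \le 1$, which guarantees $\mu_w f \in L^2(p)$ whenever $f\in L^2(p)$ and hence that the objective is finite throughout the feasible set. The delicate point is whether the claimed minimizer $u$ itself lies in $L^2(p)$: from the hypotheses we only get $\E[\mu_w^2 u^2] = 4\E[\mu_z^2] < \infty$, that is, finiteness of the \emph{weighted} norm, and because $\mu_w$ may approach $0$ this does not by itself yield $\E[u^2] < \infty$. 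I would therefore make explicit the regularity already implicit in the statement (e.g.\@ $u\in L^2(p)$, which holds as soon as $\mu_w$ is bounded away from $0$), noting that without it the infimum still equals $0$ but is only approached by truncations $u\,1[\abs{u}\le n]$ rather than attained. A final bookkeeping point is the measure-theoretic reading of the equality $u = \argmin$: it holds up to $p$-null sets, and the restriction to $\{\bm x: p(\bm x) > 0\}$ coming from Lemma~\ref{lem:u_is_ratio} is precisely what lets me disregard the region where $u$ is not defined.
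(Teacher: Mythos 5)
Your proof is correct and is essentially the paper's own argument: the paper disposes of Theorem~\ref{thm:u_by_ls} in one line by invoking Lemma~\ref{lem:u_is_ratio}, i.e.\@ the identity $\mu_w(\bm x)u(\bm x) = 2\mu_z(\bm x)$ on $\{\bm x : p(\bm x) > 0\}$, which is precisely the factorization $\mu_w f - 2\mu_z = \mu_w(f - u)$ and the pointwise-positivity uniqueness argument you spell out. Your further observation---that the stated hypotheses only give $\E[\mu_w^2 u^2] < \infty$ rather than $u \in L^2(p)$, so attainment (as opposed to approach) of the infimum implicitly requires $\mu_w$ to be bounded away from zero, as is indeed assumed later in Theorem~\ref{thm:err_bound}---is a legitimate refinement that the paper's one-line proof glosses over.
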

Theorem~\ref{thm:u_by_ls} follows from Lemma~\ref{lem:u_is_ratio}.
Note that $p_1(\bm x) \neq p_2(\bm x)$ in Eq.~\eqref{eq:conditions_on_p}
implies $\mu_w(\bm x)\neq 0$.

In what follows, we develop a method that directly estimates $u(\bm x)$
by solving Eq.~\eqref{eq:ls}.
A challenge here is that it is not straightforward to evaluate the objective
functional since it involves unknown functions, $\mu_w$ and $\mu_z$.

\subsection{Disentanglement of $z$ and $w$}
Our idea is to transform the objective functional in Eq.~\eqref{eq:ls} into
another form in which $\mu_w(\bm x)$ and $\mu_z(\bm x)$ appear separately and linearly inside the expectation operator so that we can approximate them using our separately labeled samples.

For any function $g \in L^2(p)$ and any $\bm x\in\mathcal{X}$, expanding the left-hand side of the inequality $\E[(\mu_w(\bm x)f(\bm x)
- 2\mu_z(\bm x) - g(\bm x))^2] \ge 0$, we have
\begin{align}
  \E[(\mu_w(\bm x)f(\bm x) - 2\mu_z(\bm x))^2]
  \ge 2\E[(\mu_w(\bm x) f(\bm x) - 2\mu_z(\bm x)) g(\bm x)] - \E[g(\bm x)^2]
    =: J(f, g).\label{eq:lowerbound}
\end{align}
The equality is attained when $g(\bm x) = \mu_w(\bm x) f(\bm x) - \mu_z(\bm x)$
for any fixed $f$.
This means that the objective functional of Eq.~\eqref{eq:ls} can be calculated
by maximizing $J(f, g)$ with respect to $g$.
Hence,
\begin{align}
  u(\bm x)
  &= \argmin_{f\in L^2(p)} \max_{g\in L^2(p)} J(f, g).\label{eq:original_prob}
\end{align}

Furthermore, $\mu_w$ and $\mu_z$ are separately and linearly included in $J(f, g)$,
which makes it possible to write it in terms of $z$ and $w$ as
\begin{align}
  J(f, g) = 2\E[w f(\bm x) g(\bm x)] - 4\E[z g(\bm x)] - \E[g(\bm x)^2].\label{eq:Lfg}
\end{align}
Unlike the original objective functional in Eq.~\eqref{eq:ls},
$J(f, g)$ can be easily estimated using sample averages by
\begin{align}
  \widehat{J}(f, g)
  = \frac{2}{\nw} \sum_{i=1}^{\nw} w_i f(\xbtil_i) g(\xbtil_i)
  - \frac{4}{\nz} \sum_{i=1}^{\nz} z_i g(\bm x_i)
  - \frac{1}{2\nz} \sum_{i=1}^{\nz} g(\bm x_i)^2
  - \frac{1}{2\nw} \sum_{i=1}^{\nw} g(\xbtil_i)^2.\label{eq:approxLfg}
\end{align}

In practice, we solve the following regularized empirical optimization problem:
\begin{align}
  \min_{f\in F}\max_{g\in G} \widehat{J}(f, g) + \Omega(f, g),\label{eq:approx_prob}
\end{align}
where $F$, $G$ are models for $f$, $g$ respectively, and
$\Omega(f, g)$ is some regularizer.

An advantage of the proposed framework is that it is model-independent,
and any models can be trained by optimizing the above objective. 

The function $g$ can be interpreted as a \emph{critic} of $f$ as follows.
Minimizing Eq.~\eqref{eq:Lfg} with respect to $f$ is equivalent to
minimizing $J(f, g) = \E[g(\bm x) \{\mu_w(\bm x) f(\bm x) - 2\mu_z(\bm x)\}]$.
$g(\bm x)$ serves as a good critic of $f(\bm x)$
when it makes the cost $g(\bm x) \{\mu_w(\bm x) f(\bm x) - 2\mu_z(\bm x)\}$ larger for $\bm x$ at which $f$ makes a larger error $\vert\mu_w(\bm x) f(\bm x) - 2\mu_z(\bm x)\vert$.
In particular, $g$ maximizes the objective above when $g(\bm x) = \mu_w(\bm x) f(\bm x) - 2\mu_z(\bm x)$ for any $f$, and the maximum coincides with the least-squares objective in Eq.~\eqref{eq:ls}.

Suppose that $F$ and $G$ are linear-in-parameter models:
$F = \{f_{\bm\alpha}: \bm x \mapsto \bm\alpha^\top\bm\phi(\bm x)\given \bm\alpha\in\Re^{b_f}\}$
and $G = \{g_{\bm\beta}: \bm x \mapsto \bm\beta^\top\bm\psi(\bm x)\given \bm\beta\in\Re^{b_g}\}$,
where $\bm\phi$ and $\bm\psi$ are $b_{\mathrm{f}}$-dimensional
and $b_{\mathrm{g}}$-dimensional vectors of basis functions in $L^2(p)$.
Then, $\widehat{J}(f_{\bm\alpha}, g_{\bm\beta}) = 2\bm\alpha^\top\bm A\bm\beta - 4\bm b^\top\bm\beta -\bm\beta^\top\bm C\bm\beta$,
where
\begin{align*}
  \bm A &:= \frac{1}{\nw} \sum_{i=1}^{\nw} w_i \bm\phi(\xbtil_i)\bm\psi(\xbtil_i)^\top,
  \quad\bm b := \frac{1}{\nz} \sum_{i=1}^{\nz} z_i \bm\psi(\bm x_i),\\
  \bm C &:= \frac{1}{2\nz} \sum_{i=1}^{\nz} \bm\psi(\bm x_i)\bm\psi(\bm x_i)^\top
          + \frac{1}{2\nw} \sum_{i=1}^{\nw}\bm\psi(\xbtil_i)\bm\psi(\xbtil_i)^\top.
\end{align*}
Using $\ell_2$-regularizers, $\Omega(f, g) = \lambda_{\mathrm{f}} \bm\alpha^\top\bm\alpha -
\lambda_{\mathrm{g}}\bm\beta^\top\bm\beta$ with some positive constants $\lambda_{\mathrm{f}}$ and $\lambda_{\mathrm{g}}$,
the solution to the inner maximization problem can be obtained in the following
analytical form:
\begin{align*}
    \widehat{\bm\beta}_{\bm\alpha}
    := \argmax_{\bm\beta} \widehat{J}(f_{\bm\alpha}, g_{\bm\beta})
    = \widetilde{\bm C}^{-1}(\bm A^\top\bm\alpha - 2\bm b),
\end{align*}
where $\widetilde{\bm C} = \bm C + \lambda_{\mathrm{g}}\bm I_{b_{\mathrm{g}}}$ and
$\bm I_{b_{\mathrm{g}}}$ is the $b_{\mathrm{g}}$-by-$b_{\mathrm{g}}$ identity matrix. Then, we can obtain the solution
to Eq.~\eqref{eq:approx_prob} analytically as
\begin{align*}
  \widehat{\bm\alpha}
  &:= \argmin_{\bm\alpha} \widehat{J}(f_{\bm\alpha}, g_{\widehat{\bm\beta}_{\bm\alpha}})
  = 2(\bm A\widetilde{\bm C}^{-1}\bm A^\top + \lambda_{\mathrm{f}}\bm I_{b_{\mathrm{g}}})^{-1}\bm A \widetilde{\bm C}^{-1}\bm b.
\end{align*}
Finally, from Eq.~\eqref{eq:ls},
our estimate of $u(\bm x)$ is given as $\widehat{\bm\alpha}^\top\bm\phi(\bm x)$.

\textbf{Remark on model selection:}
Model selection for $F$ and $G$ is not straightforward since the test performance measure cannot be directly evaluated with (held out) training data of our problem.
Instead, we may evaluate the value of $J(\widehat{f}, \widehat{g})$, where $(\widehat{f}, \widehat{g})\in F\times G$ is the optimal solution pair to $\min_{f\in F}\max_{g\in G} \widehat{J}(f, g)$.
However, it is still nontrivial to tell if the objective value is small because the solution is good in terms of the outer minimization, or because it is poor in terms of the inner maximization.
We leave this issue for future work.

\section{Theoretical Analysis}
A theoretically appealing property of the proposed method is that its objective
consists of simple sample averages.
This enables us to establish a generalization error bound in terms of the Rademacher complexity~\citep{koltchinskii_rademacher_2001, mohri2012foundations}.

Denote
$\varepsilon_G(f) := \sup_{g\in L^2(p)} J(f, g) - \sup_{g\in G} J(f, g)$.
Also, let $\mathfrak{R}_{q}^N(H)$ denote the \emph{Rademacher complexity} of a set of
functions $H$ over $N$ random variables following probability density $q$ (refer
to Appendix~\ref{sec:gen_bound} for the definition).
Proofs of the following theorems and corollary can be found in Appendix~\ref{sec:gen_bound}, Appendix~\ref{sec:proof_of_cor1}, and Appendix~\ref{sec:proof_of_MSE_bound} in the supplementary material.

\begin{theorem}\label{thm:err_bound}
  Assume that $n_1 = n_2$, $\widetilde{n}_1 = \widetilde{n}_2$,
  $p_1(\bm x) = p_2(\bm x)$,
  $W := \inf_{x\in\mathcal{X}}\abs{\mu_w(\bm x)} > 0$,
  $M_{\mathcal{Z}} := \sup_{z\in\mathcal{Z}}\abs{z} < \infty$,
  $M_{F} := \sup_{f\in F, x\in\mathcal{X}}\abs{f(x)} < \infty$,
  and $M_{G} := \sup_{g\in G, x\in\mathcal{X}}\abs{g(x)} < \infty$.
  Then, the following holds with probability at least $1-\delta$
  for every $f\in F$:
  \begin{align*}
    \E_{\bm x\sim p(\bm x)}[(f(\bm x) - u(\bm x))^2]
    \le \frac{1}{W^2}\left[
        \sup_{g\in G}\widehat J(f, g)
        + \mathcal{R}_{F, G}^{\nz, \nw}
        + \left(\frac{M_z}{\sqrt{2\nz}}
        + \frac{M_w}{\sqrt{2\nw}}\right)\sqrt{\log\frac{2}{\delta}}
        + \varepsilon_G(f)
      \right],
  \end{align*}
  where $M_z := 4M_{\mathcal{Y}}M_{\mathcal{G}} + M_{\mathcal{G}}^2/2$,
  $M_w = 2 M_{\mathcal{F}}M_{\mathcal{G}} + M_{\mathcal{G}}^2/2$,
  and
  $\mathcal{R}_{F, G}^{\nz, \nw} := 2(M_F + 4M_Z)\mathfrak{R}_{p(\bm x, z)}^{\nz}(G) + 2(2M_F +
  M_G)\mathfrak{R}_{p(\bm x, w)}^{\nw}(F) + 2(M_F + M_G)\mathfrak{R}_{p(\bm x, w)}^{\nw}(G)$.
\end{theorem}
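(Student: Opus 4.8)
The plan is to reduce the target mean squared error to the empirical objective $\sup_{g\in G}\widehat J(f,g)$ through a short chain of deterministic inequalities, and then to isolate and control a single random quantity, the uniform deviation of $\widehat J$ from $J$ over $F\times G$. First I would use the pointwise identity $2\mu_z(\bm x)=\mu_w(\bm x)u(\bm x)$, which is just Lemma~\ref{lem:u_is_ratio} rearranged. This turns the residual into $\mu_w(\bm x)f(\bm x)-2\mu_z(\bm x)=\mu_w(\bm x)\bigl(f(\bm x)-u(\bm x)\bigr)$, so that the assumption $\abs{\mu_w(\bm x)}\ge W>0$ on $\operatorname{supp}p$ yields
\begin{align*}
  \E_{\bm x\sim p(\bm x)}[(f(\bm x)-u(\bm x))^2]
  \le \frac{1}{W^2}\,\E_{\bm x\sim p(\bm x)}\bigl[(\mu_w(\bm x)f(\bm x)-2\mu_z(\bm x))^2\bigr].
\end{align*}
Next I would invoke the variational representation behind Eq.~\eqref{eq:lowerbound}: completing the square gives $\sup_{g\in L^2(p)}J(f,g)=\E_{\bm x}[(\mu_w f-2\mu_z)^2]$, the supremum being attained at $g=\mu_w f-2\mu_z$, which lies in $L^2(p)$ since $\mu_z\in L^2(p)$ and $\mu_w f$ is bounded. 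Writing the unconstrained supremum as its $G$-restricted value plus the gap $\varepsilon_G(f)$, and bounding $\sup_{g\in G}J(f,g)$ by $\sup_{g\in G}\widehat J(f,g)$ plus the uniform deviation, I arrive for every $f\in F$ at
\begin{align*}
  \E_{\bm x}[(f-u)^2]
  \le \frac{1}{W^2}\Bigl[\sup_{g\in G}\widehat J(f,g)+\Delta+\varepsilon_G(f)\Bigr],
  \qquad
  \Delta:=\sup_{f'\in F,\,g\in G}\bigl(J(f',g)-\widehat J(f',g)\bigr).
\end{align*}
Because $\Delta$ does not depend on the particular $f$, the resulting high-probability statement holds simultaneously for all $f\in F$.

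It then remains to show that, with probability at least $1-\delta$, $\Delta\le\mathcal{R}_{F,G}^{\nz,\nw}+\bigl(\tfrac{M_z}{\sqrt{2\nz}}+\tfrac{M_w}{\sqrt{2\nw}}\bigr)\sqrt{\log(2/\delta)}$. Using Eqs.~\eqref{eq:Lfg} and \eqref{eq:approxLfg}, I would split $J-\widehat J$ into a term supported on the $w$-sample, with integrand $2wf(\bm x)g(\bm x)-\tfrac12 g(\bm x)^2$ bounded in absolute value by $M_w$, and an independent term supported on the $z$-sample, with integrand $-4zg(\bm x)-\tfrac12 g(\bm x)^2$ bounded by $M_z$ (here $\sup_z\abs{z}=M_{\mathcal Z}=M_{\mathcal Y}$). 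Since replacing a single example changes the corresponding empirical average by a quantity of order $M_w/\nw$ (resp.\ $M_z/\nz$), McDiarmid's bounded-difference inequality applied to each half, combined with a union bound, controls $\Delta$ by its expectation plus the two $\sqrt{\log(2/\delta)}$ terms. Standard symmetrization then bounds each expected supremum by twice the Rademacher complexity of the associated integrand class.

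The main obstacle is precisely this last Rademacher bound, because the leading summand $2wfg$ is a product of the two variable classes $F$ and $G$ rather than a fixed Lipschitz image of a single class. The quadratic pieces $\tfrac12 g^2$ are handled by Talagrand's contraction lemma, since $t\mapsto t^2$ is $2M_G$-Lipschitz on $[-M_G,M_G]$, contributing a term proportional to $M_G\,\mathfrak{R}(G)$; the linear piece $4zg$ is handled by per-coordinate contraction using $\abs{z}\le M_{\mathcal Z}$, contributing a term proportional to $M_{\mathcal Z}\,\mathfrak{R}(G)$. For the bilinear piece I would absorb the $\pm1$-valued $w$ into the Rademacher signs and invoke a product bound of the form $\mathfrak{R}(F\cdot G)\lesssim M_F\,\mathfrak{R}(G)+M_G\,\mathfrak{R}(F)$, which decouples $F$ and $G$. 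Collecting these contributions over the two samples and folding in the symmetrization factor of two reproduces the three Rademacher terms of $\mathcal{R}_{F,G}^{\nz,\nw}$, and multiplying the whole chain through by $1/W^2$ finishes the proof.
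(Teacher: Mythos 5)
Your proposal follows essentially the same route as the paper's own proof: the reduction $\E[(f-u)^2]\le W^{-2}\E[(\mu_w f-2\mu_z)^2]$, the variational identity $\sup_{g\in L^2(p)}J(f,g)=\E[(\mu_w f-2\mu_z)^2]$ with the gap $\varepsilon_G(f)$, and a uniform deviation bound for $\widehat J$ obtained by splitting it into the $z$-sample part ($-4zg-\tfrac12 g^2$) and the $w$-sample part ($2wfg-\tfrac12 g^2$), each controlled by McDiarmid plus symmetrization, with Talagrand contraction and a product-class bound yielding the three Rademacher terms (this is exactly the paper's Lemma~\ref{lem:obj_bound} and its proof, organized as your single quantity $\Delta$). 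Your version is correct; it even fixes two typos in the paper (the maximizer $g=\mu_w f-2\mu_z$ rather than $\mu_w f-\mu_z$, and the coefficient $2$ on the $wfg$ term consistent with $M_w=2M_FM_G+M_G^2/2$).
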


In particular, the following bound holds for the linear-in-parameter models.
\begin{corollary}\label{cor:ls_bound}
    Let $F = \{x\mapsto \bm\alpha^\top\bm\phi(\bm x) \given \norm{\bm\alpha}_2 \le \Lambda_F\}$,
    $G = \{x\mapsto \bm\beta^\top\bm\psi(\bm x) \given \norm{\bm\beta}_2 \le \Lambda_G\}$.
    Assume that $r_F := \sup_{\bm x\in\mathcal{X}}\norm{\bm\phi(\bm x)} < \infty$
    and $r_G := \sup_{\bm x\in\mathcal{X}}\norm{\bm\psi(\bm x)} < \infty$,
    where $\norm{\cdot}_2$ is the $L_2$-norm.
    Under the assumptions of Theorem~\ref{thm:err_bound},
    it holds with probability at least $1-\delta$ that for every $f\in F$,
    \begin{align*}
      \E_{\bm x\sim p(\bm x)}[(f(\bm x) - u(\bm x))^2]
      \le \frac{1}{W^2}\left[
          \sup_{g\in G}\widehat J(f, g)
          + \frac{C_z\sqrt{\log\frac{2}{\delta}}+D_z}{\sqrt{2\nz}}
          + \frac{C_w\sqrt{\log\frac{2}{\delta}}+D_w}{\sqrt{2\nw}}
          + \varepsilon_G(f)
      \right],
    \end{align*}
    where
    $C_z := r_G^2 \Lambda_G^2 + 4 r_G \Lambda_G M_\mathcal{Y}$,
    $C_w := 2 r_F^2 \Lambda_F^2 + 2 r_F r_G \Lambda_F\Lambda_G+ r_G^2 \Lambda_G^2 $,
    $D_z := r_G^2 \Lambda_G^2/2 + 4r_G \Lambda_G M_\mathcal{Y}$,
    and $D_w := r_G^2\Lambda_G^2/2 + 4r_F r_G\Lambda_F \Lambda_G$.
\end{corollary}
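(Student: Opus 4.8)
The plan is to obtain Corollary~\ref{cor:ls_bound} as a direct specialization of Theorem~\ref{thm:err_bound} to the linear-in-parameter classes $F$ and $G$. Every data-independent quantity appearing in the bound of Theorem~\ref{thm:err_bound}---namely $M_F$, $M_G$, $M_Z$, and the three Rademacher complexities $\mathfrak R^{\nz}_{p(\bm x, z)}(G)$, $\mathfrak R^{\nw}_{p(\bm x, w)}(F)$, and $\mathfrak R^{\nw}_{p(\bm x, w)}(G)$---can be bounded in closed form in terms of $r_F, r_G, \Lambda_F, \Lambda_G$, and $M_{\mathcal Y}$. Substituting these bounds and then regrouping terms according to whether they scale with $\nz$ or $\nw$ yields the claimed constants $C_z, C_w, D_z, D_w$. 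The prefactor $1/W^2$, the empirical term $\sup_{g\in G}\widehat J(f,g)$, the approximation term $\varepsilon_G(f)$, and the high-probability guarantee are all inherited verbatim, since the specialization only replaces the complexity and concentration constants by explicit upper bounds.

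First I would control the sup-norms. For any $f=\bm\alpha^\top\bm\phi\in F$ and any $\bm x$, Cauchy--Schwarz gives $\abs{f(\bm x)}\le\norm{\bm\alpha}_2\norm{\bm\phi(\bm x)}\le\Lambda_F r_F$, hence $M_F\le\Lambda_F r_F$; the same argument gives $M_G\le\Lambda_G r_G$. Moreover, since $\mathcal Z=\{s_0 y_0 : y_0\in\mathcal Y,\ s_0\in\{1,-1\}\}$, we have $M_Z=\sup_{z\in\mathcal Z}\abs z=\sup_{y\in\mathcal Y}\abs y=M_{\mathcal Y}$. These identities turn the abstract constants $M_z, M_w$ of Theorem~\ref{thm:err_bound} into polynomials in $r_F, r_G, \Lambda_F, \Lambda_G, M_{\mathcal Y}$, which supply the $\sqrt{\log(2/\delta)}$ part of the corollary.

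Next I would bound the Rademacher complexity of each linear class. Writing $\sigma_i$ for i.i.d.\ Rademacher signs and $\bm x_i\iid q$, the standard chain of inequalities gives
\begin{align*}
  \mathfrak R^{N}_{q}(F)
  &= \E\sup_{\norm{\bm\alpha}_2\le\Lambda_F}\frac1N\,\bm\alpha^\top\!\sum_{i=1}^N\sigma_i\bm\phi(\bm x_i)
  \le \frac{\Lambda_F}{N}\,\E\norm{\sum_{i=1}^N\sigma_i\bm\phi(\bm x_i)}\\
  &\le \frac{\Lambda_F}{N}\sqrt{\sum_{i=1}^N\E\norm{\bm\phi(\bm x_i)}^2}
  \le \frac{\Lambda_F r_F}{\sqrt N},
\end{align*}
where the first inequality is Cauchy--Schwarz, the second is Jensen, and the cross terms vanish because $\E[\sigma_i\sigma_j]=0$ for $i\neq j$; the identical computation yields $\mathfrak R^{N}_{q}(G)\le\Lambda_G r_G/\sqrt N$. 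Evaluating these at the sample sizes $\nz$ and $\nw$ converts $\mathcal R^{\nz,\nw}_{F,G}$ into a sum of terms scaling as $1/\sqrt{\nz}$ and $1/\sqrt{\nw}$.

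Finally I would substitute both families of bounds into Theorem~\ref{thm:err_bound} and collect. The terms carrying a $1/\sqrt{\nz}$ factor (the $\nz$-dependent Rademacher term together with the $M_z$ concentration term) are brought under a common $1/\sqrt{2\nz}$ denominator, using $1/\sqrt{\nz}=\sqrt2/\sqrt{2\nz}$, and symmetrically for the $\nw$-terms; reading off the coefficient of $\sqrt{\log(2/\delta)}$ and the coefficient of the remaining constant part then defines $C_z, D_z$ and $C_w, D_w$, respectively. I do not expect any genuine obstacle: the argument is a substitution followed by bookkeeping. The only sub-step requiring an independent derivation is the linear-class Rademacher bound displayed above, and the only place where care is needed is matching the $\sqrt{2\nz}, \sqrt{2\nw}$ normalization of Theorem~\ref{thm:err_bound} when merging the complexity and concentration contributions, so that each numerator groups exactly into the stated $C$ and $D$ constants.
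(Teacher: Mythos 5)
Your strategy coincides with the paper's own proof, which is essentially a one-liner: it cites the textbook bound $\mathfrak{R}_p^N(F)\le r_F\Lambda_F/\sqrt{N}$ for norm-constrained linear classes \citep{mohri2012foundations} and then ``applies these bounds to Theorem~\ref{thm:err_bound}.'' Your write-up is in fact more self-contained than the paper's: your derivation of the linear-class Rademacher bound (Cauchy--Schwarz, then Jensen, with cross terms vanishing because $\E[\sigma_i\sigma_j]=0$ for $i\neq j$) is correct, and your explicit reductions $M_F\le r_F\Lambda_F$, $M_G\le r_G\Lambda_G$, $M_Z=M_{\mathcal{Y}}$ are steps the paper uses silently.

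The step that does not go through as promised is the final ``bookkeeping.'' If you substitute your (correct) bounds into Theorem~\ref{thm:err_bound} exactly as stated, the coefficients do not group into the stated $C_z, C_w, D_z, D_w$. Concretely, the $\nz$-dependent Rademacher term $2(M_F+4M_Z)\mathfrak{R}_{p(\bm x,z)}^{\nz}(G)$ contributes $2\sqrt{2}\,(M_F+4M_{\mathcal{Y}})\,r_G\Lambda_G$ to the $1/\sqrt{2\nz}$ bucket, which contains a cross term of order $r_Fr_G\Lambda_F\Lambda_G$; the stated $D_z=r_G^2\Lambda_G^2/2+4r_G\Lambda_G M_{\mathcal{Y}}$ has no dependence on $r_F,\Lambda_F$ at all, so letting $r_F\Lambda_F$ grow with everything else fixed shows the claimed constants cannot dominate what the substitution produces. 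Similarly, $2(2M_F+M_G)\mathfrak{R}_{p(\bm x,w)}^{\nw}(F)$ contributes $4\sqrt{2}\,r_F^2\Lambda_F^2$ to the $1/\sqrt{2\nw}$ bucket, whereas $r_F^2\Lambda_F^2$ appears only in $C_w$ with coefficient $2$, which is insufficient unless $\log(2/\delta)\ge 8$. To be fair, this defect is inherited from the paper itself: its own proof is the same substitution and produces the same mismatch, and its constants are mutually inconsistent across statements (e.g., $M_w$ is $2M_FM_G+M_G^2/2$ in Theorem~\ref{thm:err_bound} but $M_FM_G+M_G^2/2$ in the proof of Lemma~\ref{lem:obj_bound}, and the concentration terms switch between $\sqrt{\nz}$ and $\sqrt{2\nz}$ normalizations). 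So your approach does prove a bound of the stated form, with explicit constants depending only on $r_F, r_G, \Lambda_F, \Lambda_G, M_{\mathcal{Y}}$, and at the same $O(1/\sqrt{\nz}+1/\sqrt{\nw})$ rate; but neither your argument nor the paper's yields the particular numerical constants displayed in Corollary~\ref{cor:ls_bound}, so you should state the constants you actually obtain rather than assert that the terms ``group exactly'' into the stated ones.
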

Theorem~\ref{thm:err_bound} and Corollary~\ref{cor:ls_bound} imply that minimizing
$\sup_{g\in G} \widehat{J}(f, g)$, as the proposed method does,
amounts to minimizing an upper bound of the mean squared error.
In fact, for the linear-in-parameter models, it can be shown that
the mean squared error of the proposed estimator is upper bounded
by $O(\nicefrac{1}{\sqrt{\nz}} + \nicefrac{1}{\sqrt{\nw}})$
plus some model mis-specification error with high probability as follows.

\newcommand{\Onnd}{O\left(\left(\frac{1}{\sqrt{n}}+\frac{1}{\sqrt{\widetilde{n}}}\right)\log\frac{1}{\delta}\right)}
\begin{theorem}[Informal]\label{thm:MSE_bound}
    Let $\widehat{f} \in F$ be any approximate solution to $\inf_{f\in F}\sup_{g\in G}\widehat{J}(f, g)$ with sufficient precision.
    Under the assumptions of Corollary~\ref{cor:ls_bound},
    it holds with probability at least $1-\delta$ that
\begin{align}
    \E_{\bm x\sim p(\bm x)}[(\widehat{f}(\bm x)-u(\bm x))^2]
    \le \Onnd + \frac{2\varepsilon_G^F + \varepsilon_F}{W^2},
\end{align}
where $\varepsilon_G^F := \sup_{f\in F}\varepsilon_G(f)$ and $\varepsilon_F := \inf_{f\in F} J(f)$.
\end{theorem}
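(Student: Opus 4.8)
The plan is to decompose the excess risk into an \emph{approximation} part, governed by $\varepsilon_F$ and $\varepsilon_G^F$, and a combined \emph{statistical--optimization} part, controlled by the uniform deviation inequality that already underlies Corollary~\ref{cor:ls_bound}. Throughout I write $L(f) := \sup_{g\in L^2(p)} J(f,g) = \E[(\mu_w(\bm x) f(\bm x) - 2\mu_z(\bm x))^2]$, $L_G(f) := \sup_{g\in G} J(f,g)$, and $\widehat L_G(f) := \sup_{g\in G}\widehat J(f,g)$, so that $\varepsilon_G(f) = L(f) - L_G(f)$ and $\varepsilon_F = \inf_{f\in F} L(f)$. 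The linchpin, inherited from Lemma~\ref{lem:u_is_ratio} (equivalently Theorem~\ref{thm:u_by_ls}), is the exact identity $\mu_w(\bm x) f(\bm x) - 2\mu_z(\bm x) = \mu_w(\bm x)(f(\bm x) - u(\bm x))$, which gives $L(f) = \E[\mu_w(\bm x)^2 (f(\bm x) - u(\bm x))^2] \ge W^2\, \E[(f(\bm x)-u(\bm x))^2]$. Hence it suffices to upper bound $L(\widehat f)$ and divide by $W^2$.

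First I would peel off one approximation gap by writing $L(\widehat f) = L_G(\widehat f) + \varepsilon_G(\widehat f) \le L_G(\widehat f) + \varepsilon_G^F$, which accounts for restricting the inner maximization to the finite-dimensional class $G$. The remaining term $L_G(\widehat f)$ is tied to the empirical objective by the same McDiarmid-plus-Rademacher argument behind Corollary~\ref{cor:ls_bound}: with probability at least $1-\delta$, uniformly over $f\in F$, the two-sided deviation $\abs{\widehat L_G(f) - L_G(f)}$ is at most $B := O\left((1/\sqrt{n}+1/\sqrt{\widetilde{n}})\sqrt{\log(2/\delta)}\right)$, with the very constants $C_z, C_w, D_z, D_w$ appearing there (the two-sided nature is exactly what produces the $2/\delta$ inside the logarithm, via a union over the upper and lower tails). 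Applying this at $\widehat f$ gives $L_G(\widehat f) \le \widehat L_G(\widehat f) + B$.

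Next I would introduce a comparator $f^\star \in F$ with $L(f^\star) \le \varepsilon_F + \eta$ for arbitrarily small $\eta$, i.e.\ a near-minimizer of the population least-squares objective over $F$. Using that $\widehat f$ is an approximate saddle solution, $\widehat L_G(\widehat f) \le \inf_{f\in F}\widehat L_G(f) + \epsilon_{\mathrm{opt}} \le \widehat L_G(f^\star) + \epsilon_{\mathrm{opt}}$, and invoking the same uniform inequality in the opposite direction at $f^\star$, namely $\widehat L_G(f^\star) \le L_G(f^\star) + B$, I would chain $L_G(\widehat f) \le \widehat L_G(f^\star) + \epsilon_{\mathrm{opt}} + B \le L_G(f^\star) + 2B + \epsilon_{\mathrm{opt}}$. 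Since $L_G(f^\star) \le L(f^\star) \le \varepsilon_F + \eta$, collecting terms yields $L(\widehat f) \le \varepsilon_F + \varepsilon_G^F + 2B + \epsilon_{\mathrm{opt}} + \eta$, and dividing by $W^2$ produces the stated bound: the $O(\cdot)$ term absorbs $2B$ (using $\sqrt{\log(2/\delta)}\le\log(1/\delta)$ up to constants), while the optimization tolerance $\epsilon_{\mathrm{opt}}$ and slack $\eta$ are negligible under the ``sufficient precision'' hypothesis. The coefficient on $\varepsilon_G^F$ in the displayed statement reflects a conservative accounting of the gaps incurred when passing between $L_G$ and $L$; a single application as above already suffices, and tracking it at both $\widehat f$ and $f^\star$ is what fixes the final constant.

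The hard part will be the interplay between the minimax structure and the restriction of $g$ to $G$. The learner only controls $\widehat L_G$, whereas the MSE is governed by the \emph{unrestricted} objective $L$, so bridging the two necessarily forces the $\varepsilon_G$-type gaps into the bound; one must be careful to deploy the uniform deviation inequality in both directions inside a \emph{single} high-probability event (one union bound over the $F$-class), at $\widehat f$ and at the comparator simultaneously, rather than spending the budget $\delta$ twice. The remaining work --- bounding the Rademacher complexities of the linear-in-parameter classes by $r_F\Lambda_F/\sqrt{\widetilde{n}}$ and $r_G\Lambda_G/\sqrt{n}$ (up to constants) and assembling $C_z, C_w, D_z, D_w$ --- is routine and identical to the computation behind Theorem~\ref{thm:err_bound} and Corollary~\ref{cor:ls_bound}.
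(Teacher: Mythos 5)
Your proposal is correct and follows essentially the same route as the paper's proof: the identity $J(f)=\E[\mu_w(\bm x)^2(f(\bm x)-u(\bm x))^2]\ge W^2\,\E[(f(\bm x)-u(\bm x))^2]$, the gap $\varepsilon_G$ from restricting $g$ to $G$, the uniform deviation between $\sup_{g\in G}\widehat J(f,g)$ and $\sup_{g\in G}J(f,g)$ inherited from Corollary~\ref{cor:ls_bound}, the approximate optimality of $\widehat f$, and a near-minimizer of the population objective as comparator. Two minor remarks: your accounting is slightly tighter than the paper's (you incur $\varepsilon_G^F$ once, whereas the paper spends it twice---its term $J_G(\widetilde f)-J(\widetilde f)$ is in fact non-positive), and your explicit requirement of a two-sided deviation inequality on a single high-probability event is a point the paper leaves implicit, since it states only the one-sided bound $J_G(f)\le\widehat J_G(f)+e_{n,\delta}$ but also uses the reverse direction at the comparator in its chain.
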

A more formal version of Theorem~\ref{thm:MSE_bound} can be found in Appendix~\ref{sec:proof_of_MSE_bound}.

\section{More General Loss Functions}
Our framework can be extended to more general loss functions:
\begin{align}
  \inf_{f\in L^2(p)} \E[\ell(\mu_w(\bm x)f(\bm x), 2\mu_z(\bm x))],\label{eq:gen_obj}
\end{align}
where $\ell: \Re\times\Re\to\Re$ is a loss function that is lower semi-continuous and convex
with respect to both the first and the second arguments,
where a function $\varphi:\Re\to\Re$ is \emph{lower semi-continuous} if $\liminf_{y\to y_0} \varphi(y) = \varphi(y_0)$ for every $y_0\in\Re$~\citep{rockafellar_convex_1970}.\footnote{%
$\liminf_{y\to y_0} \varphi(y) := \lim_{\delta\searrow 0}\inf_{\abs{y-y_0} \le \delta} \varphi(y)$.
}
As with the squared loss, a major difficulty in solving this optimization
problem is that the operand of the expectation has nonlinear dependency on both
$\mu_w(\bm x)$ and $\mu_z(\bm x)$ at the same time.
Below, we will show a way to transform the objective functional into a form that
can be easily approximated using separately labeled samples.

From the assumptions on $\ell$, we have $\ell(y, y') = \sup_{z\in\Re}yz -
\ell^*(z, y')$, where $\ell^*(\cdot, y')$ is the convex conjugate of the
function $y\mapsto \ell(y, y')$ defined for any $y'\in\Re$ as $z \mapsto
\ell^*(z, y') = \sup_{y\in\Re}[yz-\ell(y, y')]$ (see \citet{rockafellar_convex_1970}). Hence,
\begin{align*}
  \E[\ell(\mu_w(\bm x) f(\bm x), 2\mu_z(\bm x))]
  = \sup_{g\in L^2(p)} \E[\mu_w(\bm x)f(\bm x)g(\bm x) - \ell^*(g(\bm x), 2\mu_z(\bm x))].
\end{align*}
Similarly, we obtain $\E[\ell^*(g(\bm x), 2\mu_z(\bm x))] = \sup_{h\in L^2(p)}
2\E[\mu_z(\bm x)h(\bm x)] - \E[\ell^*_*(g(\bm x), h(\bm x))]$,
where $\ell^*_*(y, \cdot)$ is the convex conjugate of the function $y'\mapsto
\ell^*(y, y')$ defined for any $y, z'\in\Re$ by $\ell^*_*(y, z') := \sup_{y'\in\Re}[y'z-\ell^*(y, y')]$.
Thus, Eq.~\eqref{eq:gen_obj} can be rewritten as
\begin{align*}
  \inf_{f\in L^2(p)}\sup_{g\in L^2(p)} \inf_{h\in L^2(p)} K(f, g, h),
\end{align*}
where $K(f, g, h) := \E[\mu_w(\bm x)f(\bm x)g(\bm x)] - 2\E[\mu_z(\bm x)h(\bm x)] + \E[\ell_*^*(g(\bm x),
h(\bm x))]$. Since $\mu_w$ and $\mu_z$ appear separately and linearly, $K(f, g, h)$ can be approximated by sample averages using separately
labeled samples.

\section{Experiments}\label{sec:experiment}
In this section, we test the proposed method and compare it with baselines.

\subsection{Data Sets}
\label{sec:experiment_setting}
We use the following data sets for experiments.

\textbf{Synthetic data:}
    Features $\bm x$ are drawn from the two-dimensional Gaussian distribution
    with mean zero and covariance $10\bm I_2$.
    We set $p(y\given\bm x, t)$ as the following logistic models:
    $p(y \given \bm x, t) = 1/(1 - \exp(-y\bm a_t^\top \bm x))$,
    where $\bm a_{-1} = (10, 10)^\top$ and $\bm a_{1} = (10, -10)^\top$.
    We also use the logistic models for $p_k(t\given\bm x)$: $p_1(t\given\bm x) = 1/(1-\exp(-tx_2))$ and $p_2(t\given\bm x) = 1/(1-\exp(-t\{x_2 + b\})$,
    where $b$ is varied over 25 equally spaced points in $[0, 10]$.
    We investigate how the performance changes when the difference between $p_1(t\given\bm x)$ and $p_2(t\given\bm x)$ varies.

\textbf{Email data:}
This data set consists of data collected in an email advertisement campaign for promoting customers to visit a website of a store~\citep{hillstrom, email_winner}.
Outcomes are  whether customers visited the website or not.
We use $4 \times 5000$ and $2000$ randomly sub-sampled data points for training and evaluation, respectively.

\textbf{Jobs data:}
This data set consists of randomized experimental data obtained from a job training program called the National Supported Work Demonstration~\citep{lalonde1986evaluating}, available at \url{http://users.nber.org/~rdehejia/data/nswdata2.html}.
There are 9 features, and outcomes are income levels after the training program.
The sample sizes are $297$ for the treatment group and $425$ for the control group.
We use $4\times 50$ randomly sub-sampled data points for training
and $100$ for evaluation.

\textbf{Criteo data:}
This data set consists of banner advertisement log data collected by Criteo~\citep{couterfactual_test_bed} available at \url{http://www.cs.cornell.edu/~adith/Criteo/}.
The task is to select a product to be displayed in a given banner so that the click rate will be maximized.
We only use records for banners with only one advertisement slot.
Each display banner has 10 features, and each product has 35 features.
We take the 12th feature of a product as a treatment variable
merely because it is a well-balanced binary variable.
The outcome is whether the displayed advertisement was clicked.
We treat the data set as the population although it is biased from the actual population since non-clicked impressions were randomly sub-sampled down to $10\%$ to reduce the data set size.
We made two subsets with different treatment policies by appropriately sub-sampling according to the predefined treatment policies (see Appendix~\ref{sec:subsample} in the supplementary material).
We set $p_k(t\given\bm x)$ as $p_1(t\given\bm x) = 1/(1+\exp(-t\bm 1^\top \bm x))$ and $p_2(t\given\bm x) = 1/(1+\exp(t\bm 1^\top\bm x))$, where $\bm 1 := (1, \dots, 1)^\top$.

\subsection{Experimental Settings}
We conduct experiments under the following settings.

\textbf{Methods compared:}
We compare the proposed method with baselines that separately estimate the four conditional expectations in Eq.~\eqref{eq:diff_ratio}.
In the case of binary outcomes, we use the logistic-regression-based (denoted by FourLogistic) and a neural-network-based method trained with the soft-max cross-entropy loss (denoted by FourNNC).
In the case of real-valued outcomes, the ridge-regression-based (denoted by FourRidge) and a neural-network-based method trained with the squared loss (denoted by FourNNR).
The neural networks are fully connected ones with two hidden layers each with 10 hidden units.
For the proposed method, we use the linear-in-parameter models with Gaussian basis functions centered at randomly sub-sampled training data points (see Appendix~\ref{sec:gau_basis} for more details).

\textbf{Performance evaluation:}
We evaluate trained uplift models by the area under the uplift curve (AUUC)
estimated on test samples with joint labels as well as \emph{uplift curves}~\citep{radcliffe_differential_1999}.
The uplift curve of an estimated individual uplift is the trajectory of the average uplift
when individuals are gradually moved from the control group to the treated group in the descending order according to the ranking given by the estimated individual uplift.
These quantities can be estimated when data are randomized experiment ones.
The Criteo data are not randomized experiment data unlike other data sets,
but there are accurately logged propensity scores available.
In this case, uplift curves and the AUUCs can be estimated using the inverse propensity scoring~\citep{austin2011introduction, unbiased_offline}.
We conduct $50$ trials of each experiment with different random seeds.

\subsection{Results}
The results on the synthetic data are summarized in Figure~\ref{fig:toy_auuc}.
From the plots, we can see that all methods perform relatively well in terms of AUUCs when the policies are distant from each other (i.e., $b$ is larger).
However, the performance of the baseline methods immediately declines as the treatment policies get closer to each other (i.e., $b$ is smaller).\footnote{%
The instability of performance of FourLogistic can be explained as follows. FourLogistic uses linear models, whose expressive power is limited.
The resulting estimator has small variance with potentially large bias. Since different $b$ induces different $u(\bm x)$, the bias depends on $b$.
For this reason, the method works well for some $b$ but poorly for other $b$.
}
In contrast, the proposed method maintains its performance relatively longer until $b$ reaches the point around $2$.
Note that the two policies would be identical when $b = 0$, which makes it impossible to identify the individual uplift from their samples by any method since the system in Eq.~\eqref{eq:p_y_given_x} degenerates.
Figure~\ref{fig:toy_result} highlights their performance in terms of the squared error.
For FourNNC, test points with small policy difference $\abs{p_1(t=1\given\bm x) - p_2(t=1\given\bm x)}$ (colored darker) tend to have very large estimation errors.
On the other hand, the proposed method has relatively small errors even for such points.
Figure~\ref{fig:result_real} shows results on real data sets.
The proposed method and the baseline method with logistic regressors
both performed better than the baseline method with neural nets
on the Email data set (Figure~\ref{fig:Email_ul_curve}).
On the Jobs data set, the proposed method again performed better than the baseline methods with neural networks.
For the Criteo data set, the proposed method outperformed the baseline methods (Figure~\ref{fig:Criteo_ul_curve}).
Overall, we confirmed the superiority of the proposed both on synthetic and real data sets.

\begin{figure}[t]
\centering
\begin{minipage}{0.31\textwidth}
\centering
\includegraphics[width=\textwidth]{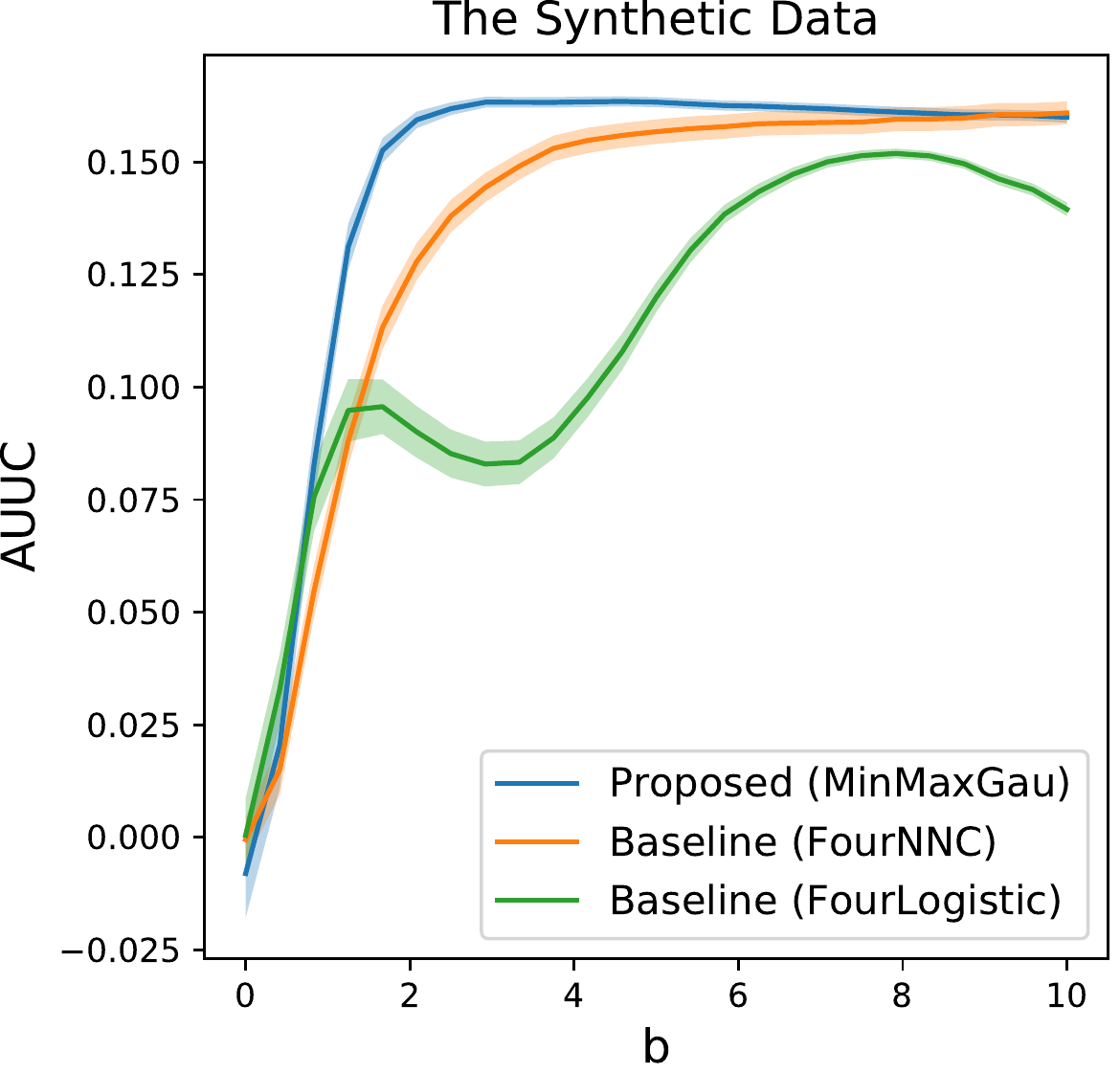}
\end{minipage}
\hspace{2mm}
\begin{minipage}{0.64\textwidth}
\centering
\caption{Results on the synthetic data.
     The plot shows the average AUUCs obtained by the
     proposed method and the baseline methods
     for different $b$.
     $p_1(t\given\bm x)$ and $p_2(t\given\bm x)$
     are closer to each other when $b$ is smaller.}
\label{fig:toy_auuc}
\end{minipage}
\end{figure}

\begin{figure*}[t]
  \centering
  \begin{subfigure}[c]{0.31\textwidth}
    \centering
    \includegraphics[width=\textwidth]{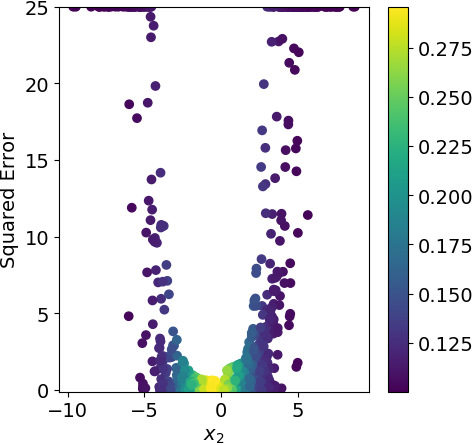}
    \subcaption{Baseline (FourLogistic).}
    \label{fig:error_4logistic_toydata}
  \end{subfigure}
  \begin{subfigure}[c]{0.31\textwidth}
    \centering
    \includegraphics[width=\textwidth]{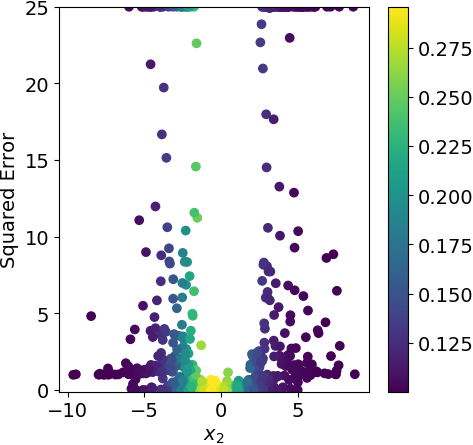}
    \subcaption{Baseline (FourNNC).}
    \label{fig:error_4NN_toydata}
  \end{subfigure}
  \begin{subfigure}[c]{0.31\textwidth}
    \centering
    \includegraphics[width=\textwidth]{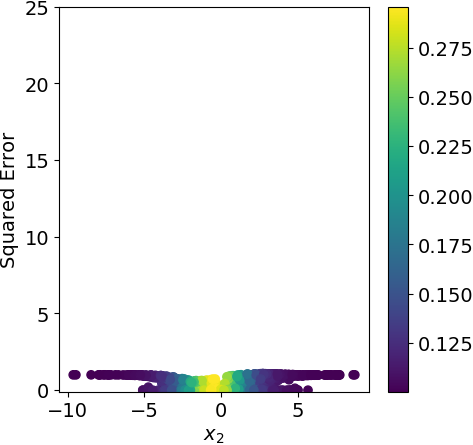}
      \subcaption{Proposed (MinMaxGau).}
    \label{fig:error_proposed_toydata}
  \end{subfigure}
  \caption{The plots show the squared errors of the estimated individual uplifts
     on the synthetic data with $b=1$.
     Each point is darker-colored when $\abs{p_1(t=1\given\bm x)-p_2(t=1\given\bm x)}$ is smaller, and lighter-colored otherwise.}
  \label{fig:toy_result}
  \begin{subfigure}[c]{0.31\textwidth}
    \includegraphics[width=\textwidth]{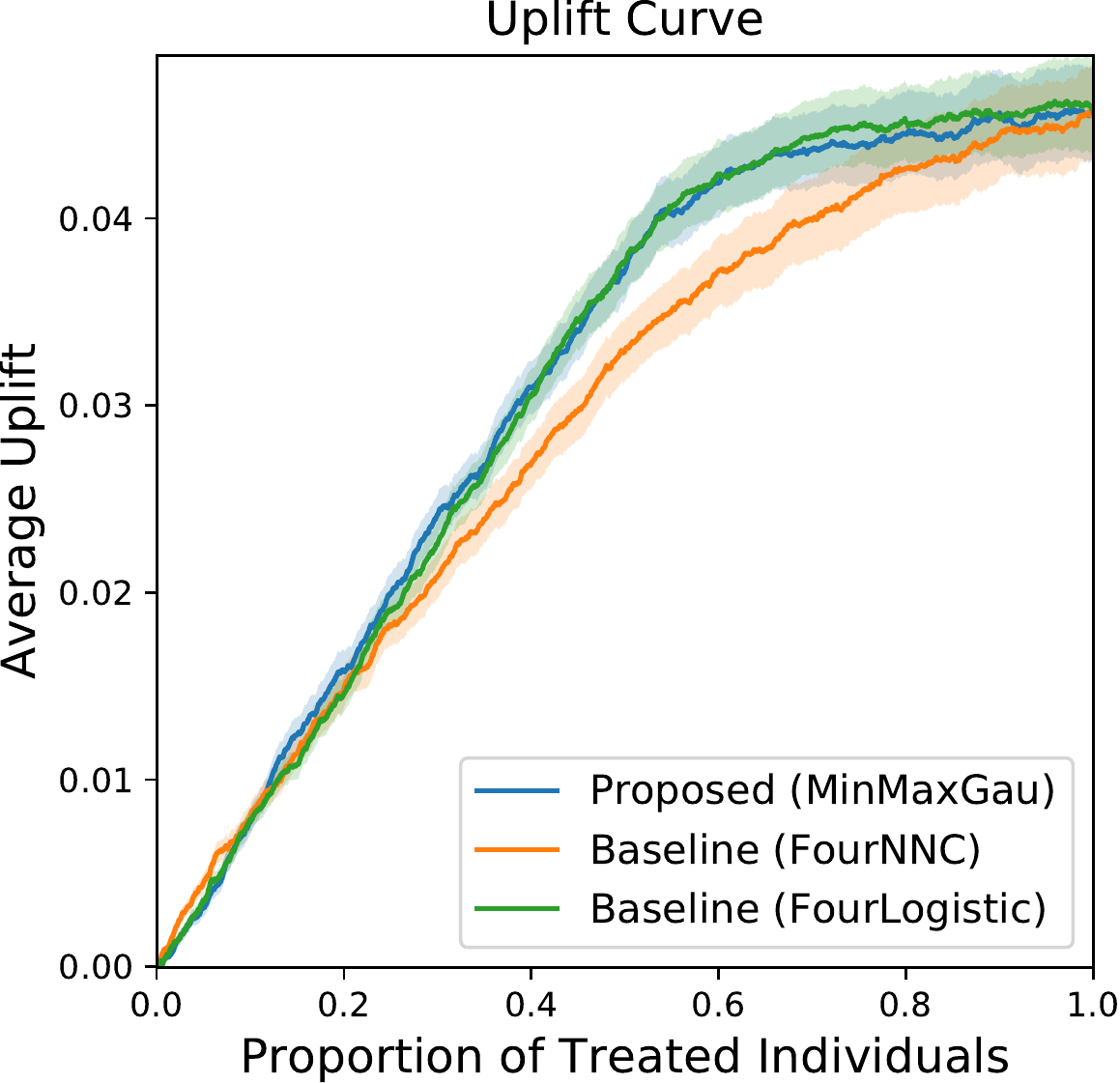}
    \subcaption{The Email data.}
    \label{fig:Email_ul_curve}
  \end{subfigure}
  \begin{subfigure}[c]{0.31\textwidth}
    \includegraphics[width=\textwidth]{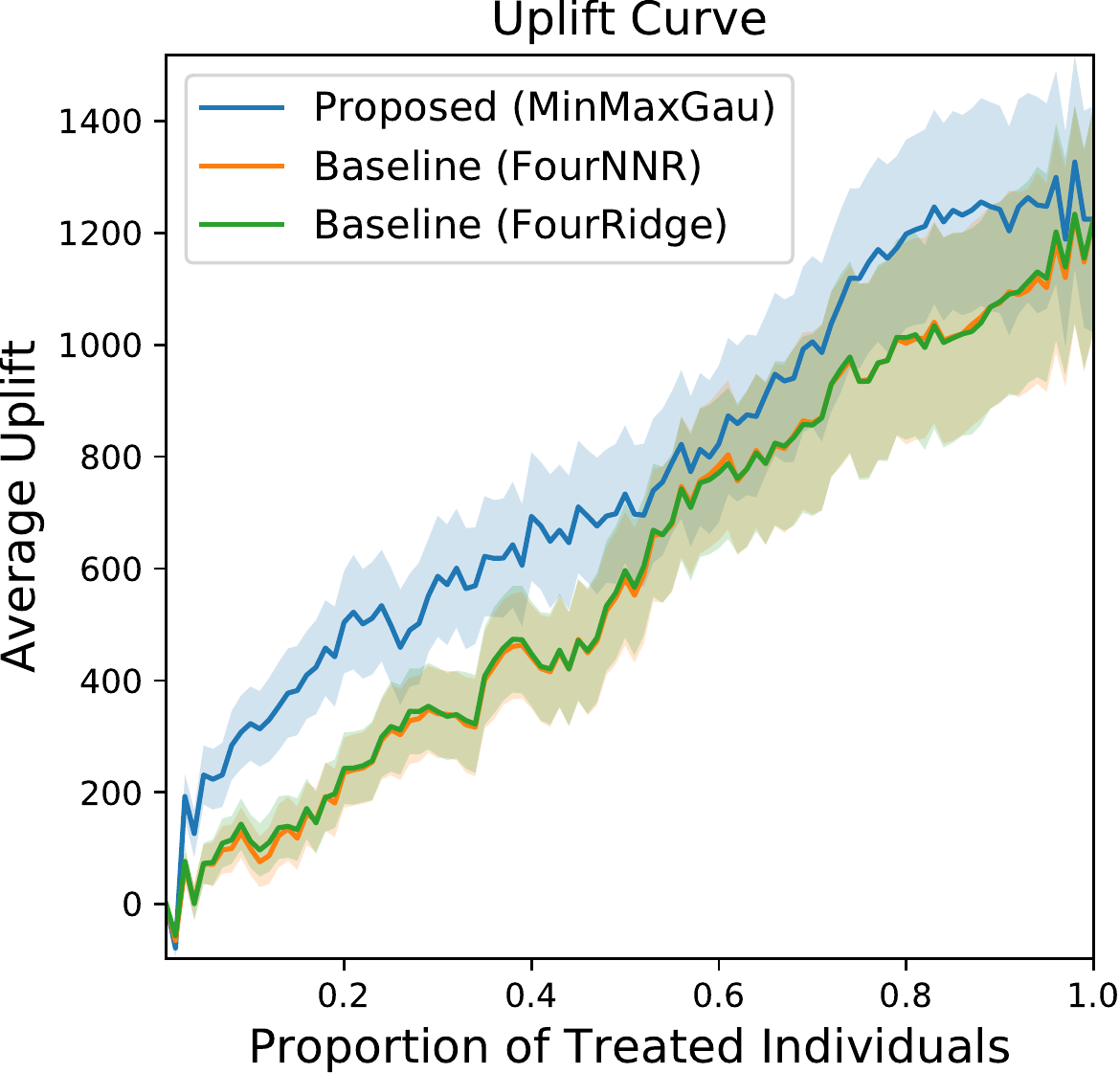}
    \subcaption{The Jobs data.}
    \label{fig:Jobs_ul_curve}
  \end{subfigure}
  \begin{subfigure}[c]{0.31\textwidth}
    \centering
    \includegraphics[width=\textwidth]{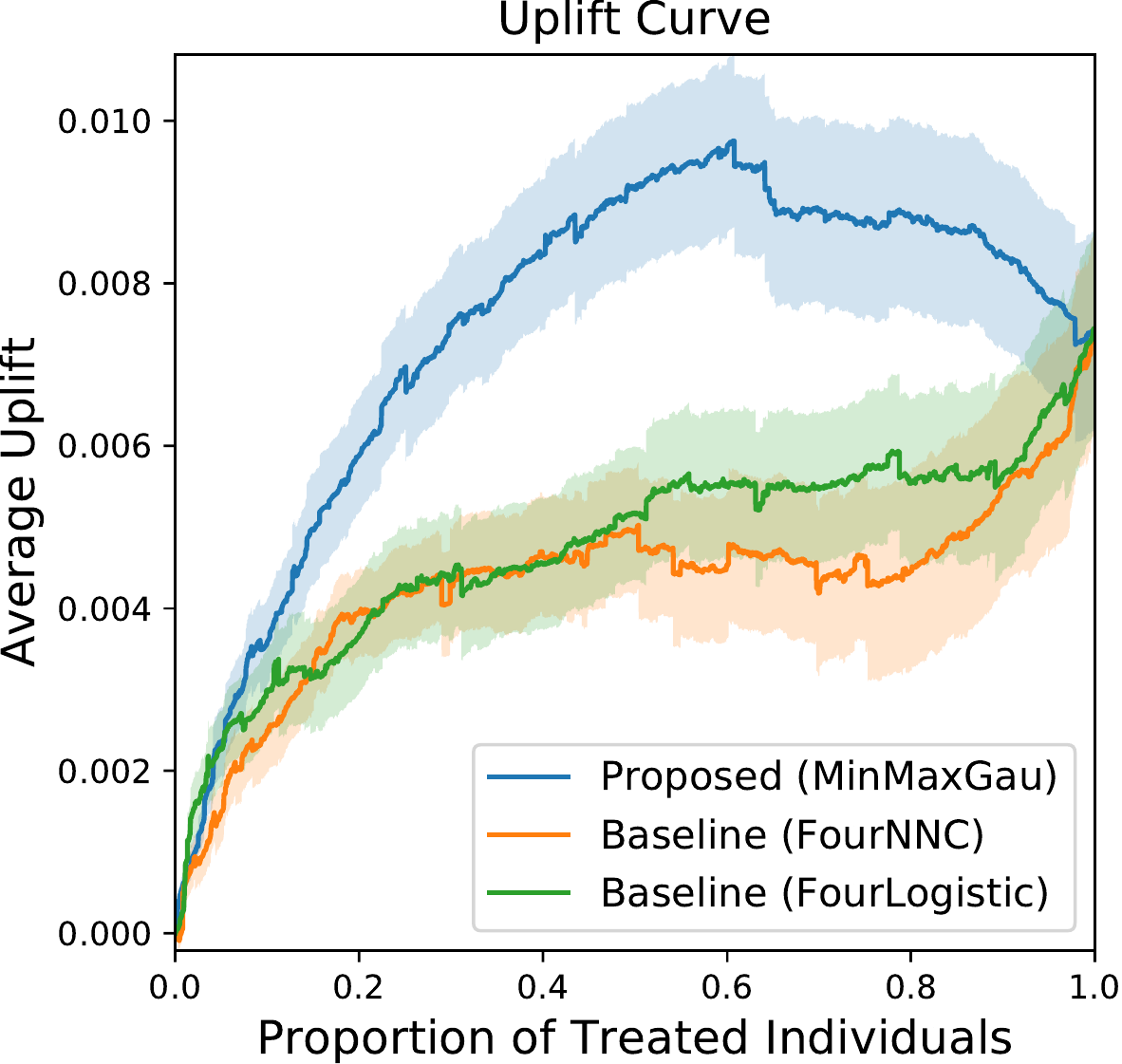}
      \subcaption{The Criteo data.}
    \label{fig:Criteo_ul_curve}
  \end{subfigure}
  \caption{Average uplifts as well as their standard errors on real-world data sets.}
  \label{fig:result_real}
\end{figure*}

\section{Conclusion}
We proposed a theoretically guaranteed and practically useful method for uplift modeling or individual treatment effect estimation under the presence of systematic missing labels.
The proposed method showed promising results in our experiments on synthetic and real data sets.
The proposed framework is model-independent: any models can be used to approximate the individual uplift including ones tailored for specific problems and complex models such as neural networks.
On the other hand, model selection may be a challenging problem due to the min-max structure.
Addressing this issue would be important research directions for further expanding the applicability and improving the performance of the proposed method.


\clearpage
\subsubsection*{Acknowledgments}
We are grateful to Marthinus Christoffel du Plessis and Takeshi Teshima for their inspiring suggestions and for the meaningful discussions.
We would like to thank the anonymous reviewers for their helpful comments.
IY was supported by JSPS KAKENHI 16J07970.
JA and FY would like to thank Adway for its support.
MS was supported by the International Research Center for Neurointelligence (WPI-IRCN) at The University of Tokyo Institutes for Advanced Study.

\FloatBarrier

\small

\bibliographystyle{plainnat}
\bibliography{upliftbib}

\clearpage
\section*{Appendix}
\renewcommand\thesubsection{\Alph{subsection}}
\setcounter{page}{1}

\subsection{Average Uplift in Terms of the Individual Uplift}
\begin{align}
U(\pi) &= \iint \sum_{t=-1,1} y p(y \given t, \bm x) \pi(t \given\bm x) p(\bm x) \mathrm{d}y \dbm x
- \iint \sum_{t=-1,1} y p(y \given t, \bm x) 1[t = -1] p(\bm x) \mathrm{d}y\dbm x\nonumber\\
&= \iint y [p(y \given t = 1, \bm x) \pi(t = 1\given\bm x)
- p(y \given t = -1, \bm x)\pi(t = 1\given\bm x)] p(\bm x) \mathrm{d}y\dbm x\nonumber\\
&= \iint y [p(y \given t = 1, \bm x) - p(y \given t = -1, \bm x)]
\pi(t = 1\given\bm x) p(\bm x) \mathrm{d}y\dbm x\nonumber\\
&= \int u(\bm x) \pi(t = 1\given\bm x) p(\bm x) \dbm x.
\end{align}

\subsection{Area Under the Uplift Curve and Ranking}
\label{sec:AUUC_ranking}
Define the following symbols:
\begin{itemize}
  \item $C_\alpha := \Pr[f(\bm x) < \alpha ]$,
  \item $U(\alpha; f) := \int u(\bm x)1[\alpha \le f(\bm x)]p(\bm x)\dbm x$,
  \item $\Rank(f) :=\E[1[f(\bm x') \le f(\bm x)][u(\bm x') - u(\bm x)]]$,
  \item $\AUUC(f) := \int_0^1 U(\alpha; f) \mathrm{d}C_\alpha$.
\end{itemize}
Then, we have
\begin{align*}
  \AUUC(f)
            &= \int_{-\infty}^\infty U(\alpha) \frac{\mathrm{d}C_\alpha}{\mathrm{d}\alpha}\mathrm{d}\alpha\\
            &= \int_{-\infty}^\infty U(\alpha) p_{f(\bm x)}(\alpha)\mathrm{d}\alpha\\
            &= \int_{\Re^d} U(f(\bm x)) p(\bm x)\dbm x\\
            &= \iint 1[f(\bm x) \le f(\bm x')]u(\bm x')p(\bm x')\dbm x' p(\bm x)\dbm x\\
            &= \E[1[f(\bm x) \le f(\bm x')]u(\bm x')]\\
            &(= \E[1[f(\bm x) \le f(\bm x')][y^+ - y^-]]),
\end{align*}
where $y^+ \sim p(y\given\bm x', t=1)$ and $y^-\sim p(y\given\bm x', t=-1)$.

Assuming $\Pr[f(\bm x') = f(\bm x)] = 0$, we have
\begin{align*}
  \Rank(f)
  &:=\E[1[f(\bm x) \ge  f(\bm x')][u(\bm x) - u(\bm x')]]\\
  &= \E[1[f(\bm x) \ge f(\bm x')]u(\bm x)]\\
  &\phantom{=}- \E[1[f(\bm x) \ge f(\bm x')]u(\bm x')]\\
  &= \AUUC(f)
  - \E[(1 - 1[f(\bm x) \le f(\bm x')])u(\bm x')]\\
  &= \E[u(\bm x)]-2\AUUC(f).
\end{align*}
Thus, $\Rank(f) = 2(\AUUC(f) - \AUUC(r))$, where $r:\Re^d\to\Re$ is the random
ranking scoring function that outputs $1$ or $-1$ with probability $1/2$ for any
input $\bm x$.
$\Rank(f)$ is maximized when $f(\bm x) \le f(\bm x') \iff u(\bm x) \le u(\bm
x')$ for almost every pair of $\bm x\in\Re^d$ and $\bm x\in\Re^d$.
In particular, $f = u$ is a maximizer of the objective.

\subsection{Proof of Lemma~\ref{lem:u_is_diff_ratio}}
\label{sec:proof_of_lemma1}
\newtheorem*{lem:u_is_diff_ratio}{Lemma~\ref{lem:u_is_diff_ratio}}
\begin{lem:u_is_diff_ratio}
For every $\bm x$ such that $p_1(\bm x) \neq p_2(\bm x)$, $u(\bm x)$ can be
expressed as
\begin{align}
  u(\bm x)
  = 2\times\frac{\E_{y\sim p_1(y\given \bm x)}[y] - \E_{y\sim p_2(y\given \bm x)}[y]}%
{\E_{t\sim p_1(t\given\bm x)}[t] - \E_{t\sim p_2(t\given\bm x)}[t]}.
\end{align}
\end{lem:u_is_diff_ratio}

\begin{proof}
\begin{align*}
  \E_{y\sim p_1(y\given \bm x)}[y] - \E_{y\sim p_2(y\given \bm x)}[y]
  &= \int\sum_{\tau=-1,1}y p(y\given\bm x,t=\tau)p_1(t=\tau\given\bm x)\dy\\
  &\phantom{=}-\int\sum_{\tau=-1,1}y p(y\given\bm x,t=\tau)p_2(t=\tau\given\bm x)\dy\\
  &= \int\sum_{\tau=-1,1}y p(y\given\bm x,t=\tau)(p_1(t=\tau\given\bm x)-p_2(t=\tau\given\bm x))\dy\\
  &= \sum_{\tau=-1,1}\E_{y\sim p(y\given\bm x,t=\tau)}[y](p_1(t=\tau\given\bm x)-p_2(t=\tau\given\bm x))\\
  &= \E_{y\sim p(y\given\bm x,t=1)}[y](p_1(t=1\given\bm x)-p_2(t=1\given\bm x))\\
  &\phantom{=}+\E_{y\sim p(y\given\bm x,t=-1)}[y](1-p_1(t=1\given\bm x)-1+p_2(t=1\given\bm x))\\
  &= u(\bm x) (p_1(t=1\mid\bm x) - p_2(t=1\mid\bm x)).
\end{align*}
When $p_1(t=1\mid\bm x) \neq p_2(t=1\mid\bm x)$, it holds that
\begin{align*}
  u(\bm x)
  &= \frac{\E_{y\sim p_1(y\given \bm x)}[y] - \E_{y\sim p_2(y\given \bm x)}[y]}%
 {p_1(t=1\mid\bm x) - p_2(t=1\mid\bm x)}\\
  &= 2\times\frac{\E_{y\sim p_1(y\given \bm x)}[y] - \E_{y\sim p_2(y\given \bm x)}[y]}%
{\E_{t\sim p_1(t\mid\bm x)}[t] - \E_{t\sim p_2(t\mid\bm x)}[t]}.
\end{align*}
\end{proof}

\subsection{Proof of Lemma~\ref{lem:u_is_ratio}}
\label{sec:proof_of_lemma2}
\newtheorem*{lem:u_is_ratio}{Lemma~\ref{lem:u_is_ratio}}
\begin{lem:u_is_ratio}
For every $\bm x$ such that $p_1(\bm x) \neq p_2(\bm x)$, $u(\bm x)$ can be
expressed as
\begin{align*}
u(\bm x) = 2\times\frac{\E[z \given \bm x]}{\E[w \mid \bm x]},
\end{align*}
where $\E[z\given\bm x]$ and $\E[w\given\bm x]$ are the conditional expectations
of $z$ given $\bm x$ over $p(z\given\bm x)$ and $w$ given $\bm x$ over
$p(w\given\bm x)$, respectively.
\end{lem:u_is_ratio}

\begin{proof}
We have
\begin{align*}
  \E[z\given\bm x] &= \int\zeta\left[\frac{1}{2}p_1(y=\zeta\given\bm x)+\frac{1}{2}p_2(y=-\zeta\given\bm x)\right]\mathrm{d}\zeta\\
                   &= \frac{1}{2}\int\zeta p_1(y=\zeta\given\bm x)\mathrm{d}\zeta+\frac{1}{2}\int\zeta p_2(y=-\zeta\given\bm x)\mathrm{d}\zeta\\
                   &= \frac{1}{2}\int y p_1(y\given\bm x)\mathrm{d}y - \frac{1}{2}\int y p_2(y\given\bm x)\mathrm{d}y\\
                   &= \frac{1}{2}\E_{y\sim p_1(y\given\bm x)}[y] - \frac{1}{2}\E_{y\sim p_2(y\given\bm x)}[y].
\end{align*}
Similarly, we obtain
\begin{align*}
  \E[w\given\bm x] &= \frac{1}{2}\E_{t\sim p_1(t\mid\bm x)}[t] - \frac{1}{2}\E_{t\sim p_2(t\mid\bm x)}[t].
\end{align*}
Thus, 
\begin{align*}
  2\times\frac{\E[z\given\bm x]}{\E[w\given\bm x]}
  &= 2\times\frac{\E_{y\sim p_1(y\given \bm x)}[y] - \E_{y\sim p_2(y\given \bm x)}[y]}%
    {\E_{t\sim p_1(t\mid\bm x)}[t] - \E_{t\sim p_2(t\mid\bm x)}[t]}
    = u(\bm x).
\end{align*}
\end{proof}

\subsection{Proof of Theorem~\ref{thm:err_bound}}
\label{sec:gen_bound}
We restate Theorem~\ref{thm:err_bound} below.
\newtheorem*{thm:err_bound}{Theorem~\ref{thm:err_bound}}
\begin{thm:err_bound}
  Assume that $n_1 = n_2$, $\widetilde{n}_1 = \widetilde{n}_2$,
  $p_1(\bm x) = p_2(\bm x)$,
  $W := \inf_{x\in\mathcal{X}}\abs{\mu_w(\bm x)} > 0$,
  $M_{\mathcal{Z}} := \sup_{z\in\mathcal{Z}}\abs{z} < \infty$,
  $M_{F} := \sup_{f\in F, x\in\mathcal{X}}\abs{f(x)} < \infty$,
  and $M_{G} := \sup_{g\in G, x\in\mathcal{X}}\abs{g(x)} < \infty$.
  Then, the following holds with probability at least $1-\delta$ that
  for every $f\in F$,
  \begin{align*}
    \E_{\bm x\sim p(\bm x)}[(f(\bm x) - u(\bm x))^2]
    \le \frac{1}{W^2}\left[
        \sup_{g\in G}\widehat J(f, g)
        + \mathcal{R}_{F, G}^{\nz, \nw}
        + \left(\frac{M_z}{\sqrt{2\nz}}
        + \frac{M_w}{\sqrt{2\nw}}\right)\sqrt{\log\frac{2}{\delta}}
        + \varepsilon_G(f)
      \right],
  \end{align*}
  where $M_z := 4M_{\mathcal{Y}}M_{\mathcal{G}} + M_{\mathcal{G}}^2/2$,
  $M_w = 2 M_{\mathcal{F}}M_{\mathcal{G}} + M_{\mathcal{G}}^2/2$,
  $\mathcal{R}_{F, G}^{\nz, \nw} := 2(M_F + 4M_Z)\mathfrak{R}_{p(\bm x, z)}^{\nz}(G) + 2(2M_F +
  M_G)\mathfrak{R}_{p(\bm x, w)}^{\nw}(F) + 2(M_F + M_G)\mathfrak{R}_{p(\bm x, w)}^{\nw}(G)$.
\end{thm:err_bound}

Define $J(f, g)$ and $\widehat{J}(f, g)$ as in Section~3.2 and denote
\begin{align*}
  \varepsilon_G(f)
  &:= \sup_{g\in L^2(p)} J(f, g) - \sup_{g\in G} J(f, g).
\end{align*}

\begin{definition}[Rademacher Complexity]
  We define the \emph{Rademacher complexity} of $H$
  over $N$ random variables following probability distribution $q$ by
  \begin{align*}
    \mathfrak{R}_p^N(H) = \E_{V_1, \dots, V_N, \sigma_1, \dots, \sigma_N}\left[\sup_{h\in H}\frac{1}{N}\sum_{i=1}^N\sigma_ih(V_i)\right],
  \end{align*}
  where $\sigma_1, \dots, \sigma_N$ are independent, $\{-1,
  1\}$-valued uniform random variables.
\end{definition}

\begin{lemma}\label{lem:obj_bound}
  Under the assumptions of Theorem~\ref{thm:err_bound},
  with probability at least $1-\delta$, it holds that for every $f\in F$,
  \begin{align*}
    J(f, g) \le \widehat{J}(f, g) + \mathfrak{R}_{F, G}
      + \left(\frac{M_z}{\sqrt{\nz}} + \frac{M_w}{\sqrt{\nw}}\right)\sqrt{\log\frac{2}{\delta}}.
  \end{align*}
\end{lemma}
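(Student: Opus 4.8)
The plan is to establish the bound uniformly over both $f\in F$ and $g\in G$ (uniformity in $g$ is precisely what will later justify replacing $\sup_{g\in G}J(f,g)$ by $\sup_{g\in G}\widehat J(f,g)$ in Theorem~\ref{thm:err_bound}), by viewing $J(f,g)-\widehat J(f,g)$ as a sum of two \emph{independent} empirical processes and controlling each with the standard Rademacher-complexity machinery. The key structural feature is that $J$ and $\widehat J$ in Eq.~\eqref{eq:Lfg} and Eq.~\eqref{eq:approxLfg} separate cleanly into a $z$-part and a $w$-part once we use $p_1(\bm x)=p_2(\bm x)=p(\bm x)$, so that both data sets share the same marginal and the term $\E[g(\bm x)^2]$ may be split into two halves, each estimated by one data set.

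Concretely, I would set
\[
  \Phi_z(g;\bm x,z) := -4zg(\bm x)-\tfrac12 g(\bm x)^2,\qquad
  \Phi_w(f,g;\bm x,w) := 2wf(\bm x)g(\bm x)-\tfrac12 g(\bm x)^2,
\]
so that $J(f,g)=\E_{p(\bm x,z)}[\Phi_z]+\E_{p(\bm x,w)}[\Phi_w]$ and $\widehat J(f,g)$ is the matching average over the $\nz$ samples $(\bm x_i,z_i)$ and the $\nw$ samples $(\xbtil_i,w_i)$. Hence $J-\widehat J=\Delta_z(g)+\Delta_w(f,g)$, where $\Delta_z,\Delta_w$ are the centered empirical averages of $\Phi_z,\Phi_w$. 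The hypotheses give $\|\Phi_z\|_\infty\le 4M_{\mathcal Y}M_G+M_G^2/2=M_z$ and $\|\Phi_w\|_\infty\le 2M_FM_G+M_G^2/2=M_w$, which is exactly where the constants $M_z,M_w$ enter.

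For the concentration step, since the $z$-samples and $w$-samples are mutually independent I would bound $\sup_{g\in G}\Delta_z(g)$ and $\sup_{f\in F,g\in G}\Delta_w(f,g)$ separately by McDiarmid's bounded-difference inequality: altering a single $z$-sample moves $\sup_g\Delta_z$ by $O(M_z/\nz)$ and a single $w$-sample moves $\sup_{f,g}\Delta_w$ by $O(M_w/\nw)$, so each supremum concentrates about its mean, giving, with probability $1-\delta/2$ apiece, deviations of order $\tfrac{M_z}{\sqrt{\nz}}\sqrt{\log\tfrac{2}{\delta}}$ and $\tfrac{M_w}{\sqrt{\nw}}\sqrt{\log\tfrac{2}{\delta}}$; a union bound merges them with probability $1-\delta$. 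It then remains to control the two expected suprema $\E[\sup_g\Delta_z]$ and $\E[\sup_{f,g}\Delta_w]$ by Rademacher complexities, via the usual factor-$2$ symmetrization argument \citep{mohri2012foundations}, after which I would peel the complexity of the $\Phi$-classes apart into $\mathfrak R(F)$ and $\mathfrak R(G)$.

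The easy pieces are handled by Talagrand's contraction lemma: $t\mapsto -4zt$ is $4M_{\mathcal Y}$-Lipschitz and vanishes at $0$, and $t\mapsto-\tfrac12 t^2$ is $M_G$-Lipschitz on $|t|\le M_G$ and vanishes at $0$, which together yield the $\mathfrak R_{p(\bm x,z)}^{\nz}(G)$ term and the $g^2$-contributions over the $w$-data. The genuine obstacle is the bilinear product $2wf(\bm x)g(\bm x)$, which depends on $f$ and $g$ simultaneously and so is \emph{not} a Lipschitz image of a single base class; plain contraction does not apply here. I would dispatch it by first absorbing $w\in\{-1,1\}$ into the Rademacher signs (their joint law is unchanged), reducing to the product class $\{fg:f\in F,\ g\in G\}$, and then invoking a product Rademacher bound of the form $\mathfrak R(FG)\lesssim M_F\,\mathfrak R(G)+M_G\,\mathfrak R(F)$, obtainable from the polarization identity $fg=\tfrac14[(f+g)^2-(f-g)^2]$ combined with contraction on the squares of $F\pm G$. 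This produces the $\mathfrak R_{p(\bm x,w)}^{\nw}(F)$ and $\mathfrak R_{p(\bm x,w)}^{\nw}(G)$ terms, and assembling all contributions into $\mathfrak R_{F,G}$ completes the bound.
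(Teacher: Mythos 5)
Your proposal follows essentially the same route as the paper's proof: the paper likewise splits $J-\widehat J$ into a $z$-process built from $h_z(\bm x,z;g)=-4zg(\bm x)-\tfrac12 g(\bm x)^2$ and a $w$-process built from $h_w$, applies a one-sided uniform Rademacher bound (its Lemma~\ref{lem:rad_bound}, a restatement of Theorem~3.1 of \citet{mohri2012foundations}, which is exactly your McDiarmid-plus-symmetrization step) to each process at confidence $\delta/2$, merges them by a union bound, and then reduces $\mathfrak{R}(H_z)$ and $\mathfrak{R}(H_w)$ to complexities of $F$ and $G$ via Talagrand's contraction. The one place you go beyond the paper is the bilinear term: the paper merely asserts Eqs.~\eqref{eq:bound_R_z}--\eqref{eq:bound_R_w} (``using some properties of the Rademacher complexity including Talagrand's lemma''), whereas you supply an actual mechanism---absorbing $w$ into the Rademacher signs (valid, since conditionally on the data the products $\sigma_i w_i$ are again i.i.d.\ uniform signs) and polarizing $fg=\tfrac14\left[(f+g)^2-(f-g)^2\right]$ with contraction applied to the squares of $F\pm G$, which are uniformly bounded by $M_F+M_G$. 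That argument is sound, but note what it costs: it gives $\mathfrak{R}(\{fg\})\le (M_F+M_G)\left(\mathfrak{R}(F)+\mathfrak{R}(G)\right)$, so your final complexity term has coefficients such as $4(M_F+M_G)$ on $\mathfrak{R}_{p(\bm x, w)}^{\nw}(F)$ and $4M_F+6M_G$ on $\mathfrak{R}_{p(\bm x, w)}^{\nw}(G)$, which are strictly larger than the coefficients $2(2M_F+M_G)$ and $2(M_F+M_G)$ in the paper's $\mathcal{R}_{F,G}^{\nz,\nw}$; formally you prove the lemma with an inflated $\mathfrak{R}_{F,G}$. This discrepancy should not be held against you: the paper never derives its constants, and they look unreliable---for instance its asserted bound $\mathfrak{R}_p^{\nz}(H_z)\le(M_F+4M_Z)\mathfrak{R}_p^{\nz}(G)$ contains $M_F$ even though $H_z$ does not involve $F$ at all, and its $h_w$ drops the factor $2$ that is present in the corresponding term of $\widehat J$ (your $\Phi_w$ has it right, consistent with the theorem's $M_w=2M_FM_G+M_G^2/2$). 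So your write-up is a faithful, and in fact more complete, version of the paper's argument, differing only in unverifiable constant factors.
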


To prove Lemma~\ref{lem:obj_bound}, we use the following lemma, which is a slightly modified
version of Theorem~3.1 in~\citet{mohri2012foundations}.
\begin{lemma}\label{lem:rad_bound}
  Let $H$ be a set of real-valued functions on a measurable space $\mathcal{D}$.
  Assume that $M := \sup_{h\in H, v\in\mathcal{D}} h(v) < \infty$.
  Then, for any $h\in H$ and any $\mathcal{D}$-valued i.i.d. random variables
  $V, V_1, \dots, V_N$ following density $q$, we have
  \begin{align}
    \E[h(V)] \le \frac{1}{N}\sum_{i=1}^N h(V_i) + 2 \mathfrak{R}_{q}^N(H) + \sqrt{\frac{M^2}{N}\log\frac{1}{\delta}}.
  \end{align}
\end{lemma}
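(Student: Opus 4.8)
The plan is to prove this through the two classical ingredients behind every Rademacher-complexity uniform deviation bound: a bounded-differences concentration step via McDiarmid's inequality, followed by a symmetrization argument that converts the expected supremum of the empirical deviation into the Rademacher complexity $\mathfrak{R}_q^N(H)$. This is exactly the structure of Theorem~3.1 in \citet{mohri2012foundations}, the only change being that the range constant $1$ for $[0,1]$-valued functions is replaced by the bound $M$.

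First I would introduce the one-sided uniform deviation functional
\[
  \Phi(V_1,\dots,V_N) := \sup_{h\in H}\Bigl(\E[h(V)] - \frac{1}{N}\sum_{i=1}^N h(V_i)\Bigr),
\]
and check its bounded-differences property: replacing a single argument $V_j$ by an independent copy $V_j'$ changes only the term $\frac{1}{N}(h(V_j)-h(V_j'))$ inside the supremum, so $\Phi$ can change by at most a multiple of $M/N$ because each $h$ is bounded by $M$ on $\mathcal{D}$. McDiarmid's inequality then gives, with probability at least $1-\delta$,
\[
  \Phi(V_1,\dots,V_N) \le \E[\Phi] + \sqrt{\frac{M^2}{N}\log\frac{1}{\delta}}.
\]

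The second step bounds $\E[\Phi]$ by $2\mathfrak{R}_q^N(H)$ using the standard ghost-sample symmetrization. Writing $\E[h(V)] = \E_{V'}\bigl[\frac{1}{N}\sum_i h(V_i')\bigr]$ for an independent copy $V_1',\dots,V_N'$ and pulling this expectation outside the supremum by Jensen's inequality yields $\E[\Phi]\le\E\bigl[\sup_{h}\frac{1}{N}\sum_i(h(V_i')-h(V_i))\bigr]$. Since each difference $h(V_i')-h(V_i)$ is symmetric about $0$, inserting i.i.d.\@ Rademacher signs $\sigma_i$ leaves the distribution unchanged; splitting the supremum over the two summands and using that $-\sigma_i$ has the same law as $\sigma_i$ then produces exactly $2\mathfrak{R}_q^N(H)$. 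Finally, for any fixed $h\in H$ we have $\E[h(V)]-\frac{1}{N}\sum_i h(V_i)\le\Phi$, so chaining the two displayed bounds and rearranging gives the assertion.

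The only genuinely delicate point is the concentration step: the statement assumes merely $\sup_{h,v}h(v)\le M$, whereas McDiarmid requires two-sided control of the increment $h(V_j)-h(V_j')$. I would therefore rely on the fact that all the function classes to which the lemma is actually applied in Lemma~\ref{lem:obj_bound} are bounded in absolute value (equivalently, have range at most $M$), which pins down the bounded-differences constant and hence the exact form of the $\sqrt{M^2/N\,\log(1/\delta)}$ term (after a harmless loosening of the McDiarmid constant). The symmetrization step is entirely routine once this boundedness is in place.
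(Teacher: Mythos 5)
Your proposal is essentially the paper's own proof: the paper disposes of this lemma in one line by citing the proof of Theorem~3.1 of \citet{mohri2012foundations} with the bounded-differences constant $B_\varphi$ reset, and that proof is exactly the McDiarmid-plus-ghost-sample-symmetrization argument you spell out for the functional $\Phi$. The subtlety you flag is genuine --- the stated hypothesis $\sup_{h,v}h(v)\le M$ is one-sided, while McDiarmid needs control of the oscillation, and the classes used in Lemma~\ref{lem:obj_bound} are only bounded as $\abs{h}\le M$ --- but your fix is not as ``harmless'' as you suggest: with $\abs{h}\le M$ the increment bound is $2M/N$, so McDiarmid yields a deviation term $\sqrt{(2M^2/N)\log(1/\delta)}$, a factor $\sqrt{2}$ \emph{larger} than the $\sqrt{(M^2/N)\log(1/\delta)}$ the lemma asserts, i.e.\ your argument as written proves a slightly weaker inequality than the one stated. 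That said, the paper's proof carries the identical defect (it sets $B_\varphi = M/N$, which is justified only when each $h$ has range of length at most $M$, e.g.\ $h\in[0,M]$), so your attempt matches the paper both in method and in this residual constant-factor sloppiness; you are, if anything, the more explicit about where it hides.
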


\begin{proof}[Proof of Lemma~\ref{lem:rad_bound}]
  We follow the proof of Theorem~3.1 in~\citet{mohri2012foundations}
  except that we set the constant $B_\phi$ in Eq.~\eqref{eq:diff_bounded} to $M/m$
  when we apply McDiarmid's inequality (Section~\ref{sec:McD}).
\end{proof}

Now, we prove Lemma~\ref{lem:obj_bound}.
\begin{proof}[Proof of Lemma~\ref{lem:obj_bound}]
  For any $f\in\mathcal{F}$, $g\in\mathcal{G}$,
  $\bm x', \widetilde{\bm x}' \in\mathcal{X}$,
    $z'\in\mathcal{Z} := \{y, -y\given y\in\mathcal{Y}\}$, and $w'\in\{-1, 1\}$, we define $h_z$ and $h_w$ as follows:
  \begin{align*}
    h_z(\bm x', z'; g) &:= -4z'g(\bm x') - \frac{1}{2}g(\bm x')^2,\\
    h_w(\widetilde{\bm x}', w'; f, g) &:= w'f(\widetilde{\bm x}')g(\widetilde{\bm x}') - \frac{1}{2} g(\widetilde{\bm x}')^2.
  \end{align*}

  Denoting $H_z := \{(\bm x', z') \mapsto h_z(\bm x', z'; g) \given g\in G\}$, we have
  \begin{align*}
    \sup_{h\in H_z, \bm x'\in\mathcal{X}, z'\in\mathcal{Z}} \abs{h(\bm x', z')}
    &\le 4M_ZM_G + \frac{1}{2}M_G^2 =: M_z < \infty,
  \end{align*}
  and thus, we can apply Lemma~\ref{lem:rad_bound} to confirm that
  with probability at least $1-\delta/2$,
  \begin{align*}
    \E_{(\bm x, z)\sim  p(\bm x, z)}[h_z(\bm x, z; g)]
    \le \frac{1}{\nz} \sum_{(\bm x_i, z_i)\in S_z} h_z(\bm x_i, z_i; g)
      + 2 \mathfrak{R}_p^{\nz}(H_z)
    + \sqrt{\frac{M_z^2}{\nz}\log\frac{2}{\delta}},
  \end{align*}
  where $\{(\bm x_i, z_i)\}_{i=1}^n =: S_z$ are the samples defined in Section~\ref{sec:direct_estimation}.
  Similarly, it holds that with probability at least $1-\delta/2$,
  \begin{align*}
    \E_{(\widetilde{\bm x}, w)\sim  p(\bm x, w)}[h_w(\widetilde{\bm x}, w; f, g)]
    \le \frac{1}{\nw} \sum_{(\widetilde{\bm x}, w_i)\in S_w} h_w(\widetilde{\bm x}_i, w_i; f, g)
  + 2 \mathfrak{R}_p^{\nw}(H_w)
    + \sqrt{\frac{M_w^2}{\nw}\log\frac{2}{\delta}},
  \end{align*}
  where $H_w := \{(\widetilde{\bm x}', w') \mapsto h_w(\widetilde{\bm x}', w';
  f, g) \given f\in F, g\in G\}$, $M_w := M_FM_G + M_G^2/2$,
  and $\{(\widetilde{\bm x}_i, w_i)\}_{i=1}^n =: S_w$ are the samples defined in Section~\ref{sec:direct_estimation}.
  By the union bound, we have the following with probability at least
  $1-\delta$:
  \begin{align}
    &\E_{(\bm x, z)\sim  p(\bm x, z)}[h_z(\bm x, z; g)]
    + \E_{(\widetilde{\bm x}, w)\sim  p(\bm x, w)}[h_w(\widetilde{\bm x}, w; f, g)]\\
    &\le \frac{1}{\nz} \sum_{(\bm x_i, z_i)\in S_z} h_z(\bm x_i, z_i, g)
    + \frac{1}{\nw} \sum_{(\widetilde{\bm x}, w_i)} h_w(\bm x_i, w_i, f, g)\\
      &\phantom{\le}+ 2(\mathfrak{R}_p^{\nz}(H_z) + \mathfrak{R}_p^{\nw}(H_w))
      + \left(\frac{M_z}{\sqrt{\nz}} + \frac{M_w}{\sqrt{\nw}}\right)\sqrt{\log\frac{2}{\delta}},\label{eq:bound_h}
  \end{align}
  Using some properties of the Rademacher complexity including Talagrand's lemma,
  we can show that
  \begin{align}
      \mathfrak{R}_p^{\nz}(H_z) &\le (M_F + 4M_Z)\mathfrak{R}_p^{\nz}(G),\label{eq:bound_R_z}\\
      \mathfrak{R}_p^{\nw}(H_w) &\le (2M_F + M_G)\mathfrak{R}_p^{\nw}(F) + (M_F + M_G)\mathfrak{R}_p^{\nw}(G).\label{eq:bound_R_w}
  \end{align}
  Clearly,
  \begin{align*}
      \widehat{J}(f, g) &= \frac{1}{\nz}\sum_{(\bm x_i, z_i)\in S_z} h(\bm x_i, z_i;
  g)
  + \frac{1}{\nw} \sum_{(\widetilde{\bm x}_i, w_i)\in S_w} h(\widetilde{\bm x}_i, w_i; f, g),\\
  J(f, g) &= \E_{(\bm x, z)\sim  p(\bm x, z)}[h_z(\bm x, z; g)] +
  \E_{(\widetilde{\bm x}, w)\sim  p(\bm x, z)}[h_w(\widetilde{\bm x}, w; f,
  g)].
  \end{align*}
  From Eq.~\eqref{eq:bound_h}, Eq.~\eqref{eq:bound_R_z}, and Eq.~\eqref{eq:bound_R_w}, we obtain
  \begin{align}
    J(f, g) \le \widehat{J}(f, g) + \mathfrak{R}_{F, G}
      + \left(\frac{M_z}{\sqrt{\nz}} + \frac{M_w}{\sqrt{\nw}}\right)\sqrt{\log\frac{2}{\delta}},\label{eq:JvsJhat}
  \end{align}
  where
  \begin{align*}
      \mathfrak{R}_{F, G} := 2(M_F + 4M_Z)\mathfrak{R}_p^{\nz}(G)
      + 2(2M_F + M_G)\mathfrak{R}_p^{\nw}(F) + 2(M_F + M_G)\mathfrak{R}_p^{\nw}(G).
  \end{align*}

\end{proof}

Finally, we prove Theorem~\ref{thm:err_bound}.
\begin{proof}[Proof of Theorem~\ref{thm:err_bound}]
  From Lemma~\ref{lem:obj_bound}, with probability at least $1-\delta$,  it holds that for all $f\in F$
  \begin{align}
      \sup_{g\in G} J(f, g)
      \le \sup_{g\in G} \widehat{J}(f, g) + \mathfrak{R}_{F, G}
      + \left(\frac{M_z}{\sqrt{\nz}} + \frac{M_w}{\sqrt{\nw}}\right)\sqrt{\log\frac{2}{\delta}}.\label{eq:b1}
  \end{align}
    Moreover, recalling $W := \inf_{\bm x\in\mathcal{X}}\abs{\mu_w(\bm x)}$
	and $\sup_{g\in L^2(p)} J(f, g) = \E[(\mu_w(\bm x)f(\bm x) -\mu_z(\bm x))^2]$, we have
  \begin{align}
      \E\left[(f(\bm x) - u(\bm x))^2\right]
      &= \E\left[\left(f(\bm x) - \frac{\mu_z(\bm x)}{\mu_w(\bm x)}\right)^2\right]\\
      &\le \frac{1}{W^2} \E[(\mu_w(\bm x)f(\bm x) - \mu_z(\bm x))^2]\\
      &= \frac{1}{W^2} \left[\varepsilon_G(f) + \sup_{g\in G} J(f, g)\right].\label{eq:b2}
  \end{align}
    Combining Eq.~\eqref{eq:b1} and Eq.~\eqref{eq:b2} yields the inequality of the theorem.
\end{proof}

\subsection{Proof of Corollary~\ref{cor:ls_bound}}
\label{sec:proof_of_cor1}
\newtheorem*{cor:ls_bound}{Corollary~\ref{cor:ls_bound}}
\begin{cor:ls_bound}
    Let $F = \{x\mapsto \bm\alpha^\top\bm\phi(\bm x) \given \norm{\bm\alpha}_2 \le \Lambda_F\}$,
    $G = \{x\mapsto \bm\beta^\top\bm\psi(\bm x) \given \norm{\bm\beta}_2 \le \Lambda_G\}$,
    and assume that $r_F := \sup_{\bm x\in\mathcal{X}}\norm{\bm\phi(\bm x)}_2 < \infty$
    and $r_G := \sup_{\bm x\in\mathcal{X}}\norm{\bm\psi(\bm x)}_2 < \infty$,
    where $\norm{\cdot}_2$ is the $L_2$-norm.
    Under the assumptions of Theorem~\ref{thm:err_bound},
    it holds with probability at least $1-\delta$ that for every $f\in F$,
    \begin{align*}
      \E_{\bm x\sim p(\bm x)}[(f(\bm x) - u(\bm x))^2]
      \le \frac{1}{W^2}\left[
          \sup_{g\in G}\widehat J(f, g)
          + \frac{C_z\sqrt{\log\frac{2}{\delta}}+D_z}{\sqrt{2\nz}}
          + \frac{C_w\sqrt{\log\frac{2}{\delta}}+D_w}{\sqrt{2\nw}}
          + \varepsilon_G(f)
      \right],
    \end{align*}
    where
    $C_z := r_G^2 \Lambda_G^2 + 4 r_G \Lambda_G M_\mathcal{Y}$,
    $C_w := 2 r_F^2 \Lambda_F^2 + 2 r_F r_G \Lambda_F\Lambda_G+ r_G^2 \Lambda_G^2 $,
    $D_z := r_G^2 \Lambda_G^2/2 + 4r_G \Lambda_G M_\mathcal{Y}$,
    and $D_w := r_G^2\Lambda_G^2/2 + 4r_F r_G\Lambda_F \Lambda_G$.
\end{cor:ls_bound}

\begin{proof}
Under the assumptions, it is known that the Rademacher complexity of the linear-in-parameter model $F$ can be upper bounded as follows~\citep{mohri2012foundations}:
\begin{align*}
    \mathfrak{R}_p^N(F) \le \frac{r_F \Lambda_F}{\sqrt{N}}.
\end{align*}
We can bound $\mathfrak{R}_p^N(G)$ similarly.
Applying these bounds to Theorem~\ref{thm:err_bound}, we obtain the statement of Corollary~1. 
\end{proof}

\subsection{Proof of Theorem~\ref{thm:MSE_bound}}
\label{sec:proof_of_MSE_bound}
\newtheorem*{thm:MSE_bound}{Theorem~\ref{thm:MSE_bound}}
We prove the following, formal version of Theorem~\ref{thm:MSE_bound}.
\begin{thm:MSE_bound}
Under the assumptions of Corollary~\ref{cor:ls_bound}, it holds with probability at least $1-\delta$ that $\E[(\widehat{f}(\bm x)-u(\bm x))^2]\le(4e_{n, \delta} + 2\varepsilon_G^F + \varepsilon_F)/W^2$,
where $\varepsilon_G^F := \sup_{f\in F}\varepsilon_G(f)$,
and $\varepsilon_F := \inf_{f\in F} J(f)$,
$\widehat{f} \in F$ is any approximate solution to $\inf_{f\in F}\sup_{g\in G}\widehat{J}(f, g)$ satisfying $\sup_{g\in G} \widehat{J}(\widehat{f}, g) \le \inf_{f\in F}\sup_{g\in G} \widehat{J}(f, g) + e_{n, \delta}$, and
    \begin{align*}
      e_{n, \delta} := \frac{C_z\sqrt{\log\frac{2}{\delta}}+D_z}{\sqrt{2\nz}}
      + \frac{C_w\sqrt{\log\frac{2}{\delta}}+D_w}{\sqrt{2\nw}}.
    \end{align*}

\end{thm:MSE_bound}

\begin{proof}
    Let $J(f) := \sup_{g\in L^2} J(f, g) = \E[(\mu_w(\bm x)f(\bm x) - \mu_z(\bm x))^2]$, $J_G(f) := \sup_{g\in G} J(f, g)$, $\widehat{J}_G(f) := \sup_{g\in G} \widehat{J}(f, g)$.
    Let $\widetilde{f} \in F$ be any approximate solution to $\inf_{f\in F}J(f)$ satisfying $J(\widetilde{f}) \le \varepsilon_F + e_{n, \delta}$.

    As a special case of Eq.~\ref{eq:b1}, we can prove that with probability at least $1 - \delta$, it holds for every $f\in F$ that $J_G(f) \le \widehat{J}_G(f) + e_{n, \delta}$.
    From Corollary~\ref{cor:ls_bound}, it holds that with probability at least $1 - \delta$,
\begin{align*}
J(\widehat{f})
&\le
\left[J(\widehat{f}) - J_G(\widehat{f})\right]
+\left[J_G(\widehat{f}) - \widehat{J}_G(\widehat{f})\right]
+\left[\widehat{J}_G(\widehat{f}) - \widehat{J}_G(\widetilde{f})\right]\\
&\phantom{\le}
+\left[\widehat{J}_G(\widetilde{f}) - J_G(\widetilde{f})\right]
+ \left[J_G(\widetilde{f}) - J(\widetilde{f})\right]
+ J(\widetilde{f})\\
&\le \varepsilon_G^F + e_{n, \delta} + e_{n, \delta}\\
&\phantom{\le} + e_{n, \delta} + \varepsilon_G^F + \left[\varepsilon_F + e_{n, \delta} \right]\\
&\le 4e_{n, \delta} + 2\varepsilon_G^F + \varepsilon_F.
\end{align*}
Since $\E[(\widehat{f}(\bm x)-u(\bm x))^2] \le \frac{1}{W^2}J(\widehat{f})$, 
we obtain the bound in Theorem~\ref{thm:MSE_bound}.
\end{proof}

\subsection{Binary Outcomes}
When outcomes $y$ take on binary values, e.g., success or failure, without loss of generality, we can assume that $y\in\{-1, 1\}$.
Then, by the definition of the individual uplift, $u(\bm x)\in[-2, 2]$ for any
$\bm x\in\Re^d$.
In order to incorporate this fact, we may add the following range constraints on $f$: $-2 \le f(\bm x) \le 2$ for every $\bm x \in \{\bm x_i\}_{i=1}^\nz \cup \{\xbtil_i\}_{i=1}^{\nw}$.

\subsection[Handling Different Feature Distributions]{Cases Where $p_1(\bm x) \neq p_2(\bm x)$ or $(n_1, \widetilde{n}_1)\neq(n_1, \widetilde{n}_1)$}
\label{sec:different_p_or_n}

So far, we have assumed that $p_1(\bm x) = p_2(\bm x)$, $m_1 = m_2$, and $n_1
= n_2$.
The proposed method can be adapted to the more general case where these assumptions may not hold.

Let $r_k(\bm x) = \frac{\ny}{2\ny_k}\cdot\frac{p(\bm x)}{p_k(\bm x)}$ and
$\widetilde{r}_k(\bm x) = \frac{\nt}{2\nt_k}\cdot\frac{p(\bm x)}{p_k(\bm x)}$, $k=1,2$, for every $\bm x$ with $p_k(\bm x) > 0$.
Let $k_i := 1$ if the sample $\bm x_i$ originally comes from $p_1(\bm x)$,
and $k_i := 2$ if it comes from $p_2(\bm x)$. Similarly, define
$\widetilde{k}_i\in\{1, 2\}$ according to whether $\widetilde{\bm x}_i$ comes
from $p_1(\bm x)$ or $p_2(\bm x)$.
Then, unbiased estimators of the three terms in the proposed objective Eq.~\eqref{eq:Lfg} 
are given as the following weighted sample averages using $r_k$ and $\widetilde{r}_k$:
\begin{align*}
  \E_{\bm x\sim p(\bm x)}[w f(\bm x)g(\bm x)]
  &\approx 
  \frac{1}{\nw}\sum_{i=1}^{\nw}[w_i f(\widetilde{\bm x}_i)g(\widetilde{\bm x}_i)\widetilde{r}_{\widetilde{k}_i}(\widetilde{\bm x}_i)],\\
  \E_{\bm x\sim p(\bm x)}[z g(\bm x)]
  &\approx 
  \frac{1}{\nz}\sum_{i=1}^{\nz}[z_i g(\bm x_i)r_{k_i}(\bm x_i)]\\
  \E_{\bm x\sim p(\bm x)}[g(\bm x)^2]
  &\approx 
  \frac{1}{2\nz}\sum_{i=1}^{\nz}[g(\bm x_i)^2r_{k_i}(\bm x_i)]
  + \frac{1}{2\nw}\sum_{i=1}^{\nw}[g(\widetilde{\bm x}_i)^2\widetilde{r}_{\widetilde{k}_i}(\widetilde{\bm x}_i)].
\end{align*}

The density ratios $p_k(\bm x)/p(\bm x)$ can be accurately estimated using i.i.d.\@
samples from $p_k(\bm x)$ and $p(\bm x)$~\citep{nguyen2010estimating, sugiyama2012density, yamada2013relative, liu2017ratio}.

\subsection{Unbiasedness of the Weighted Sample Average}
Below, we show that the weighted sample averages are unbiased estimates.
We only prove for $\E[wf(\bm x)g(\bm x)]$ since the other cases can be proven similarly.
The expectation of the weighted sample average transforms as follows:
\begin{align*}
  &\frac{1}{\nw}\sum_{i=1}^{\nw} \E_{\widetilde{\bm x}^{(k)}_i\sim p_k(\bm x), t^{(k)}_i\sim p_k(t\given\widetilde{\bm x}^{(k)}_i)}\left[w_if(\widetilde{\bm x}_i)g(\widetilde{\bm x}_i)\widetilde{r}_{\widetilde{k}_i}(\widetilde{\bm x}_i)\right]\\
  &=\frac{1}{\nt}\sum_{k=1,2}\sum_{i=1}^{\nt_k} \E_{\bm x\sim p_k(\bm x), t\sim p_k(t\given\bm x)}\left[(-1)^{k-1}t f(\bm x)g(\bm x)\frac{\nt}{2\nt_k}\cdot\frac{p(\bm x)}{p_k(\bm x)}\right]\\
  &=\frac{1}{2}\sum_{k=1,2} \E_{\bm x\sim p(\bm x), t\sim p_k(t\given\bm x)}\left[(-1)^{k-1}t f(\bm x)g(\bm x)\right]\\
  &=\iint(-1)^{k-1}t\sum_{k=1,2}\frac{1}{2}p_k(t\given\bm x)f(\bm x)g(\bm x)p(\bm x)\mathrm{d}t \dbm x\\
  &=\iint w p(w\given\bm x) f(\bm x)g(\bm x)p(\bm x)\mathrm{d}t \dbm x\\
  &=\E_{\bm x\sim p(\bm x), w\sim p(w\given\bm x)}[w f(\bm x)g(\bm x)].
\end{align*}

\subsection{Gaussian Basis Functions Used in Experiments}
\label{sec:gau_basis}
The $l$-th element of $\bm\phi(\bm x) = (\phi_1(\bm x), \dots, \phi_{b_f}(\bm x))^\top$ is defined by
\begin{align*}
    \phi_l(\bm x) := \exp\left(\frac{-\norm{\bm x - \bm x^{(l)}}^2}{\sigma^2}\right),
\end{align*}
where $\bm x^{(l)}$, $l = 1, \dots, b_f$, are randomly chosen training data points.
We used $b_f = 100$ and $\sigma = 25$ for all experiments. $\bm\psi$ is defined similarly.

\subsection{Justification of the Sub-Sampling Procedure}
\label{sec:subsample}
Suppose that we want a sample subset $S_k$ following the treatment policy $p_k(t\mid\bm x)$.
For each sample $(\bm x_i, t_i, y_i) \sim p(\bm x, t, y)$ in the original dataset,
we randomly add it into $S_k$ with probability proportional to $p_k(t_i\mid\bm x_i) / p(t_i\mid\bm x_i)$.
Then,
\begin{align*}
    p(\bm x_i, t_i, y_i \mid (\bm x_i, t_i, y_i) \in S_k)
    &= \frac{p((\bm x_i, t_i, y_i) \in S_k \mid \bm x_i, t_i, y_i)p(\bm x_i, t_i, y_i)}%
    {\int \sum_{y_i, t_i} p((\bm x_i, t_i, y_i) \in S_k \mid \bm x_i, t_i, y_i)p(\bm x_i, t_i, y_i)\mathrm{d}\bm x_i}\\
    &= \frac{p_k(t_i\mid\bm x_i)p(y_i\mid \bm x_i, t_i) p(\bm x_i)}
    {\int \sum_{y_i, t_i} p_k(t_i\mid\bm x_i)p(y_i\mid \bm x_i, t_i) p(\bm x_i)\mathrm{d}\bm x_i}\\
    &= p_k(t_i\mid\bm x_i)p(y_i\mid \bm x_i, t_i) p(\bm x_i).
\end{align*}
This means that the subsamples $S_k$ preserve the original $p(y\mid\bm x, t)$ and $p(\bm x)$ but follow the desired treatment policy $p_k(t\mid\bm x)$.

\subsection{McDiarmid's Inequality}
\label{sec:McD}
Although McDiarmid's inequality is a well known theorem,
we present the statement to make the paper self-contained.
\begin{theorem}[McDiarmid's inequality]
  \label{theorem:McD}
  Let $\varphi: \mathcal{D}^N \to \Re$ be a measurable function.
  Assume that there exists a real number $B_{\varphi} > 0$ such that
  \begin{align}
    \abs{\varphi(v_1, \dots, v_N) - \varphi(v'_1,
  \dots, v'_N)} \le B_{\varphi}, \label{eq:diff_bounded}
  \end{align}
  for any $v_i, \dots, v_N, v_1, \dots, v'_N\in\mathcal{D}$ where $v_i = v'_i$
  for all but one $i\in\{1, \dots, N\}$.
  Then, for any $\mathcal{D}$-valued independent random variables $V_1,
  \dots, V_N$ and any $\delta > 0$
  the following holds with probability at least $1 - \delta$:
  \begin{align*}
    \varphi(V_1, \dots, V_N) \le \E[\varphi(V_1, \dots, V_N)] + \sqrt{\frac{B_{\varphi}^2N}{2} \log \frac{1}{\delta}}.
  \end{align*}
\end{theorem}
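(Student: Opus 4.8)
The plan is to prove McDiarmid's inequality via the \emph{Doob martingale} construction combined with the Azuma--Hoeffding argument. Write $\Phi := \varphi(V_1,\dots,V_N)$, and for $i = 0,1,\dots,N$ define the conditional expectations $Z_i := \E[\Phi \given V_1,\dots,V_i]$, with the conventions $Z_0 = \E[\Phi]$ (conditioning on nothing) and $Z_N = \Phi$ (since $\Phi$ is $(V_1,\dots,V_N)$-measurable). The sequence $(Z_i)_{i=0}^N$ is a martingale with respect to the filtration generated by $V_1,\dots,V_i$, so setting the increments $D_i := Z_i - Z_{i-1}$ gives the telescoping identity $\Phi - \E[\Phi] = \sum_{i=1}^N D_i$, and each increment has conditional mean zero: $\E[D_i \given V_1,\dots,V_{i-1}] = 0$.

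The crucial step is to show that each $D_i$ has conditional range controlled by $B_\varphi$. I would introduce
\[
  L_i := \inf_{v} \E[\Phi \given V_1,\dots,V_{i-1}, V_i = v] - Z_{i-1},
  \quad
  U_i := \sup_{v} \E[\Phi \given V_1,\dots,V_{i-1}, V_i = v] - Z_{i-1},
\]
so that $L_i \le D_i \le U_i$ almost surely. Using independence of the $V_j$, the conditional expectation $\E[\Phi \given V_1,\dots,V_{i-1}, V_i = v]$ is the average of $\varphi$ over the remaining coordinates with the $i$-th frozen at $v$. Comparing two values $v, v'$ of the $i$-th coordinate, the bounded-differences hypothesis~\eqref{eq:diff_bounded} gives a pointwise gap of at most $B_\varphi$ for every fixed value of the other coordinates; integrating over $V_{i+1},\dots,V_N$ (whose law is unaffected by $V_i$ by independence) yields $U_i - L_i \le B_\varphi$. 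Hence, conditionally on $V_1,\dots,V_{i-1}$, the variable $D_i$ has mean zero and lies in an interval of length at most $B_\varphi$, so Hoeffding's lemma gives the conditional bound $\E[e^{s D_i} \given V_1,\dots,V_{i-1}] \le \exp(s^2 B_\varphi^2 / 8)$ for every $s > 0$.

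From here I would run the standard Chernoff argument: peeling off the innermost conditioning and iterating the tower property $N$ times with the bound above yields $\E[e^{s(\Phi - \E[\Phi])}] = \E[\prod_{i=1}^N e^{s D_i}] \le \exp(N s^2 B_\varphi^2 / 8)$. Markov's inequality then gives, for any $t > 0$, $\Pr[\Phi - \E[\Phi] \ge t] \le \exp(-st + N s^2 B_\varphi^2 / 8)$, and optimizing at $s = 4t/(N B_\varphi^2)$ produces the tail bound $\Pr[\Phi - \E[\Phi] \ge t] \le \exp(-2 t^2 / (N B_\varphi^2))$. Setting the right-hand side equal to $\delta$ and solving gives $t = \sqrt{(B_\varphi^2 N / 2)\log(1/\delta)}$, so with probability at least $1 - \delta$ we have $\varphi(V_1,\dots,V_N) \le \E[\varphi(V_1,\dots,V_N)] + \sqrt{(B_\varphi^2 N / 2)\log(1/\delta)}$, as claimed. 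The main obstacle is the middle step: verifying that the martingale increments have conditional range at most $B_\varphi$ requires reducing the conditional expectation to an average over the untouched coordinates and invoking~\eqref{eq:diff_bounded} uniformly; the martingale moment-generating-function and Chernoff manipulations that follow are routine.
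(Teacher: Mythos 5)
Your proof is correct, but there is nothing in the paper to compare it against: the paper states McDiarmid's inequality verbatim as a well-known theorem, explicitly included only to make the paper self-contained, and uses it as a black box (instantiating the constant $B_\varphi$) inside the proof of its Rademacher-complexity lemma. What you have written is the canonical textbook argument that the paper implicitly defers to: the Doob martingale $Z_i = \E[\varphi(V_1,\dots,V_N)\given V_1,\dots,V_i]$, the key step where independence lets the bounded-differences hypothesis pass through the conditional expectation to give $U_i - L_i \le B_\varphi$, Hoeffding's lemma for each conditional increment, iterated conditioning for the moment generating function, and the optimized Chernoff bound. The arithmetic checks out: $s = 4t/(NB_\varphi^2)$ yields the tail $\exp(-2t^2/(NB_\varphi^2))$, and setting this equal to $\delta$ gives exactly the deviation $\sqrt{(B_\varphi^2 N/2)\log(1/\delta)}$ in the statement. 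The only loose end, a standard one, is the measurability of the envelopes $L_i$ and $U_i$ defined via $\inf_v$ and $\sup_v$ over a possibly uncountable $\mathcal{D}$; replacing these with essential infima/suprema with respect to the law of $V_i$ (or assuming $\mathcal{D}$ is a nice, e.g.\ Polish, space) repairs this without altering any constant in the bound.
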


\end{document}